\newtheorem{proposition}{Proposition}
\newtheorem{corollary}{Corollary}
\def\b{\ensuremath\boldsymbol}
\icmltitlerunning{Multidimensional Scaling, Sammon Mapping, and Isomap: Tutorial and Survey}
\begin{document}

\AddToShipoutPictureBG*{%
  \AtPageUpperLeft{%
    \setlength\unitlength{1in}%
    \hspace*{\dimexpr0.5\paperwidth\relax}
    \makebox(0,-0.75)[c]{\normalsize {\color{black} To appear as a part of an upcoming academic book on dimensionality reduction and manifold learning.}}
    }}

\twocolumn[
\icmltitle{Multidimensional Scaling, Sammon Mapping, and Isomap: \\Tutorial and Survey}

% It is OKAY to include author information, even for blind
% submissions: the style file will automatically remove it for you
% unless you've provided the [accepted] option to the icml2016
% package.
\icmlauthor{Benyamin Ghojogh}{bghojogh@uwaterloo.ca}
\icmladdress{Department of Electrical and Computer Engineering, 
\\Machine Learning Laboratory, University of Waterloo, Waterloo, ON, Canada}
\icmlauthor{Ali Ghodsi}{ali.ghodsi@uwaterloo.ca}
\icmladdress{Department of Statistics and Actuarial Science \& David R. Cheriton School of Computer Science, 
\\Data Analytics Laboratory, University of Waterloo, Waterloo, ON, Canada}
\icmlauthor{Fakhri Karray}{karray@uwaterloo.ca}
\icmladdress{Department of Electrical and Computer Engineering, 
\\Centre for Pattern Analysis and Machine Intelligence, University of Waterloo, Waterloo, ON, Canada}
\icmlauthor{Mark Crowley}{mcrowley@uwaterloo.ca}
\icmladdress{Department of Electrical and Computer Engineering, 
\\Machine Learning Laboratory, University of Waterloo, Waterloo, ON, Canada}

% You may provide any keywords that you
% find helpful for describing your paper; these are used to populate
% the "keywords" metadata in the PDF but will not be shown in the document
\icmlkeywords{Tutorial}

\vskip 0.3in
]

\begin{abstract}
Multidimensional Scaling (MDS) is one of the first fundamental manifold learning methods. It can be categorized into several methods, i.e., classical MDS, kernel classical MDS, metric MDS, and non-metric MDS. Sammon mapping and Isomap can be considered as special cases of metric MDS and kernel classical MDS, respectively. In this tutorial and survey paper, we review the theory of MDS, Sammon mapping, and Isomap in detail. We explain all the mentioned categories of MDS. Then, Sammon mapping, Isomap, and kernel Isomap are explained. Out-of-sample embedding for MDS and Isomap using eigenfunctions and kernel mapping are introduced. Then, Nystrom approximation and its use in landmark MDS and landmark Isomap are introduced for big data embedding. We also provide some simulations for illustrating the embedding by these methods. 
\end{abstract}

\section{Introduction}

Multidimensional Scaling (MDS) \cite{cox2008multidimensional}, first proposed in \cite{torgerson1952multidimensional}, is one of the earliest proposed manifold learning methods. 
It can be used for manifold learning, dimensionality reduction, and feature extraction \cite{ghojogh2019feature}. 
The idea of MDS is to preserve the similarity \cite{torgerson1965multidimensional} or dissimilarity/distances \cite{beals1968foundations} of points in the low-dimensional embedding space. 
Hence, it fits the data locally to capture the global structure of data \cite{saul2003think}.
MDS can be categorized into classical MDS, metric MDS, and non-metric MDS.

In later approaches, Sammon mapping \cite{sammon1969nonlinear} was proposed which is a special case of the distance-based metric MDS. One can consider Sammon mapping as the first proposed nonlinear manifold learning method \cite{ghojogh2019roweis}. The disadvantage of Sammon mapping is its iterative solution of optimization, which makes this method a little slow.

The classical MDS can be generalized to have kernel classical MDS in which any valid kernel can be used. Isomap \cite{tenenbaum2000global} is a special case of the kernel classical MDS which uses a kernel constructed from geodesic distances between points. Because of the nonlinearity of geodesic distance, Isomap is also a nonlinear manifold learning method. 

MDS and its special cases, Sammon mapping, and Isomap have had different applications \cite{young2013multidimensional}. For example, MDS has been used for facial expression recognition \cite{russell1985multidimensional,katsikitis1997classification}. Kernel Isomap has also been used for this application \cite{zhao2011facial}. 

The goal is to embed the high-dimensional input data $\{\b{x}_i\}_{i=1}^n$ into the lower dimensional embedded data $\{\b{y}_i\}_{i=1}^n$ where $n$ is the number of data points. We denote the dimensionality of input and embedding spaces by $d$ and $p \leq d$, respectively, i.e. $\b{x}_i \in \mathbb{R}^d$ and $\b{y}_i \in \mathbb{R}^p$. We denote $\mathbb{R}^{d \times n} \ni \b{X} := [\b{x}_1, \dots, \b{x}_n]$ and $\mathbb{R}^{p \times n} \ni \b{Y} := [\b{y}_1, \dots, \b{y}_n]$. 

The remainder of this paper is organized as follows. Section \ref{section_MDS} explains MDS and its different categories, i.e., classical MDS, generalized classical MDS (kernel classical MDS), metric MDS, and non-metric MDS. Sammon mapping and Isomap are introduced in Sections \ref{section_Sammon_mapping} and \ref{section_Isomap}, respectively. Section \ref{section_outOfSample} introduced the methods for out-of-sample extensions of MDS and Isomap methods. Landmark MDS and landmark Isomap, for big data embedding, are explained in Section \ref{section_landmark_methods}. Some simulations for illustrating the results of embedding are provided in Section \ref{section_simulations}. Finally, Section \ref{section_conclusion} concludes the paper. 

\section{Multidimensional Scaling}\label{section_MDS}

MDS, first proposed in \cite{torgerson1952multidimensional}, can be divided into several different categories \cite{cox2008multidimensional,borg2005modern}, i.e., classical MDS, metric MDS, and non-metric MDS. Note that the results of these are different \cite{jung2013lecture}.
In the following, we explain all three categories. 

\subsection{Classical Multidimensional Scaling}

\subsubsection{Classical MDS with Euclidean Distance}

The \textit{classical MDS} is also referred to as \textit{Principal Coordinates Analysis (PCoA)}, or \textit{Torgerson Scaling}, or \textit{Torgerson–Gower scaling} \cite{gower1966some}. 
The goal of classical MDS is to preserve the similarity of data points in the embedding space as it was in the input space \cite{torgerson1965multidimensional}. One way to measure similarity is inner product. Hence, we can minimize the difference of similarities in the input and embedding spaces: 
\begin{equation}
\begin{aligned}
& \underset{\{\b{y}_i\}_{i=1}^n}{\text{minimize}}
& & c_1 := \sum_{i=1}^n \sum_{j=1}^n (\b{x}_i^\top \b{x}_j - \b{y}_i^\top \b{y}_j)^2,
\end{aligned}
\end{equation}
whose matrix form is:
\begin{equation}\label{equation_MDS_optimization_matrix}
\begin{aligned}
& \underset{\b{Y}}{\text{minimize}}
& & c_1 = ||\b{X}^\top \b{X} - \b{Y}^\top \b{Y}||_F^2,
\end{aligned}
\end{equation}
where $\|\!\cdot\!\|_F$ denotes the Frobenius norm, and $\b{X}^\top \b{X}$ and $\b{Y}^\top \b{Y}$ are the Gram matrices of the original data $\b{X}$ and the embedded data $\b{Y}$, respectively.

The objective function, in Eq. (\ref{equation_MDS_optimization_matrix}), is simplified as:
\begin{align*}
||\b{X}^\top \b{X} &- \b{Y}^\top \b{Y}||_F^2 \\
&= \textbf{tr}\big[(\b{X}^\top \b{X} - \b{Y}^\top \b{Y})^\top (\b{X}^\top \b{X} - \b{Y}^\top \b{Y})\big] \\
&= \textbf{tr}\big[(\b{X}^\top \b{X} - \b{Y}^\top \b{Y}) (\b{X}^\top \b{X} - \b{Y}^\top \b{Y})\big] \\
&= \textbf{tr}\big[(\b{X}^\top \b{X} - \b{Y}^\top \b{Y})^2\big],
\end{align*}
where $\textbf{tr}(.)$ denotes the trace of matrix.
If we decompose $\b{X}^\top \b{X}$ and $\b{Y}^\top \b{Y}$ using eigenvalue decomposition \cite{ghojogh2019eigenvalue}, we have:
\begin{align}
& \b{X}^\top \b{X} = \b{V} \b{\Delta} \b{V}^\top, \label{equation_classical_MDS_X_decompose} \\
& \b{Y}^\top \b{Y} = \b{Q} \b{\Psi} \b{Q}^\top, \label{equation_classical_MDS_Y_decompose}
\end{align}
where eigenvectors are sorted from leading (largest eigenvalue) to trailing (smallest eigenvalue). 
Note that, rather than eigenvalue decomposition of $\b{X}^\top \b{X}$ and $\b{Y}^\top \b{Y}$, one can decompose $\b{X}$ and $\b{Y}$ using Singular Value Decomposition (SVD) and take the right singular vectors of $\b{X}$ and $\b{Y}$ as $\b{V}$ and $\b{Q}$, respectively. The matrices $\b{\Delta}$ and $\b{\Psi}$ are the obtained by squaring the singular values (to power $2$). See {\citep[Proposition 1]{ghojogh2019unsupervised}} for proof. 

% Note that, rather than eigenvalue decomposition, one can decompose $\b{X}^\top \b{X}$ and $\b{Y}^\top \b{Y}$, as in Eqs. (\ref{equation_classical_MDS_X_decompose}) and (\ref{equation_classical_MDS_Y_decompose}), using Singular Value Decomposition (SVD), where for both $\b{X}^\top \b{X}$ and $\b{Y}^\top \b{Y}$, the left and right singular matrices are equivalent because $\b{X}^\top \b{X}$ and $\b{Y}^\top \b{Y}$ are symmetric.

The objective function can be further simplified as:
\begin{align*}
\therefore ~~~ &||\b{X}^\top \b{X} - \b{Y}^\top \b{Y}||_F^2 \\
&= \textbf{tr}\big[(\b{X}^\top \b{X} - \b{Y}^\top \b{Y})^2\big] \\ 
&= \textbf{tr}\big[(\b{V} \b{\Delta} \b{V}^\top - \b{Q} \b{\Psi} \b{Q}^\top)^2\big] \\
&\overset{(a)}{=} \textbf{tr}\big[(\b{V} \b{\Delta} \b{V}^\top - \b{V}\b{V}^\top\b{Q} \b{\Psi} \b{Q}^\top\b{V}\b{V}^\top)^2\big] \\
&= \textbf{tr}\Big[\big(\b{V} (\b{\Delta} - \b{V}^\top\b{Q} \b{\Psi} \b{Q}^\top\b{V}) \b{V}^\top \big)^2\Big] \\
&= \textbf{tr}\Big[\b{V}^2 (\b{\Delta} - \b{V}^\top\b{Q} \b{\Psi} \b{Q}^\top\b{V})^2 (\b{V}^\top)^2\Big] \\
&\overset{(b)}{=} \textbf{tr}\Big[(\b{V}^\top)^2 \b{V}^2 (\b{\Delta} - \b{V}^\top\b{Q} \b{\Psi} \b{Q}^\top\b{V})^2\Big] \\
&= \textbf{tr}\Big[(\underbrace{\b{V}^\top \b{V}}_{\b{I}})^2 (\b{\Delta} - \b{V}^\top\b{Q} \b{\Psi} \b{Q}^\top\b{V})^2\Big] \\
&\overset{(c)}{=} \textbf{tr}\Big[(\b{\Delta} - \b{V}^\top\b{Q} \b{\Psi} \b{Q}^\top\b{V})^2\Big],
\end{align*}
where $(a)$ and $(c)$ are for $\b{V}^\top \b{V} = \b{V} \b{V}^\top = \b{I}$ because $\b{V}$ is a non-truncated (square) orthogonal matrix (where $\b{I}$ denotes the identity matrix). The reason of $(b)$ is the cyclic property of trace.

Let $\mathbb{R}^{n \times n} \ni \b{M} := \b{V}^\top \b{Q}$, so:
\begin{align*}
& ||\b{X}^\top \b{X} - \b{Y}^\top \b{Y}||_F^2 = \textbf{tr}\Big[(\b{\Delta} - \b{M} \b{\Psi} \b{M}^\top)^2\Big].
\end{align*}
Therefore:
\begin{align*}
\therefore ~~~~~ &\underset{\b{Y}}{\text{minimize}}~~ ||\b{X}^\top \b{X} - \b{Y}^\top \b{Y}||_F^2 \\
&\equiv \underset{\b{M}, \b{\Psi}}{\text{minimize}}~~ \textbf{tr}\Big[(\b{\Delta} - \b{M} \b{\Psi} \b{M}^\top)^2\Big].
\end{align*}
The objective function is:
\begin{align*}
c_1 &= \textbf{tr}\Big[(\b{\Delta} - \b{M} \b{\Psi} \b{M}^\top)^2\Big] \\
&= \textbf{tr}(\b{\Delta}^2 + (\b{M} \b{\Psi} \b{M}^\top)^2 - 2\b{\Delta} \b{M} \b{\Psi} \b{M}^\top) \\
&= \textbf{tr}(\b{\Delta}^2) + \textbf{tr}((\b{M} \b{\Psi} \b{M}^\top)^2) - 2\,\textbf{tr}(\b{\Delta} \b{M} \b{\Psi} \b{M}^\top).
\end{align*}
As the optimization problem is unconstrained and the objective function is the trace of a quadratic function, the minimum is non-negative. 

If we take derivative with respect to the first objective variable, i.e., $\b{M}$, we have:
\begin{align}
&\mathbb{R}^{n \times n} \ni \frac{\partial c_1}{\partial \b{M}} = 2(\b{M} \b{\Psi} \b{M}^\top) \b{M} \b{\Psi} - 2\b{\Delta} \b{M} \b{\Psi} \overset{\text{set}}{=} \b{0} \nonumber \\
&\implies (\b{M} \b{\Psi} \b{M}^\top) (\b{M} \b{\Psi}) = (\b{\Delta}) (\b{M} \b{\Psi}) \nonumber \\
&\overset{(a)}{\implies} \b{M} \b{\Psi} \b{M}^\top = \b{\Delta}, \label{equation_metric_MDS_Delta_1}
\end{align}
where $(a)$ is because $\b{M} \b{\Psi} \neq \b{0}$.

For the derivative with respect to the second objective variable, i.e., $\b{\Psi}$, we simplify the objective function a little bit:
\begin{align*}
c_1 &= \textbf{tr}(\b{\Delta}^2) + \textbf{tr}((\b{M} \b{\Psi} \b{M}^\top)^2) - 2\,\textbf{tr}(\b{\Delta} \b{M} \b{\Psi} \b{M}^\top) \\
&= \textbf{tr}(\b{\Delta}^2) + \textbf{tr}(\b{M}^2 \b{\Psi}^2 \b{M}^{\top 2}) - 2\,\textbf{tr}(\b{\Delta} \b{M} \b{\Psi} \b{M}^\top) \\
&\overset{(a)}{=} \textbf{tr}(\b{\Delta}^2) + \textbf{tr}(\b{M}^{\top 2} \b{M}^2 \b{\Psi}^2) - 2\,\textbf{tr}(\b{M}^\top \b{\Delta} \b{M} \b{\Psi}) \\
&= \textbf{tr}(\b{\Delta}^2) + \textbf{tr}((\b{M}^{\top} \b{M} \b{\Psi})^2) - 2\,\textbf{tr}(\b{M}^\top \b{\Delta} \b{M} \b{\Psi}),
\end{align*}
where $(a)$ is because of the cyclic property of trace.

Taking derivative with respect to the second objective variable, i.e., $\b{\Psi}$, gives:
\begin{align}
&\mathbb{R}^{n \times n} \ni \frac{\partial c_1}{\partial \b{\Psi}} = 2 \b{M}^\top (\b{M} \b{\Psi} \b{M}^\top) \b{M} - 2 \b{M}^\top \b{\Delta} \b{M} \overset{\text{set}}{=} \b{0} \nonumber \\
&\implies \b{M}^\top (\b{M} \b{\Psi} \b{M}^\top) \b{M} =  \b{M}^\top (\b{\Delta}) \b{M} \nonumber \\
&\overset{(a)}{\implies} \b{M} \b{\Psi} \b{M}^\top = \b{\Delta}, \label{equation_metric_MDS_Delta_2}
\end{align}
where $(a)$ is because $\b{M} \neq \b{0}$.
Both Eqs. (\ref{equation_metric_MDS_Delta_1}) and (\ref{equation_metric_MDS_Delta_2}) are:
\begin{align*}
\b{M} \b{\Psi} \b{M}^\top = \b{\Delta},
\end{align*}
whose one possible solution is:
\begin{align}
&\b{M} = \b{I}, \label{equation_metric_MDS_M_I} \\
&\b{\Psi} = \b{\Delta}. \label{equation_metric_MDS_Psi}
\end{align}
which means that the minimum value of the non-negative objective function $\textbf{tr}((\b{\Delta} - \b{M} \b{\Psi} \b{M}^\top)^2)$ is zero.

We had $\b{M} = \b{V}^\top \b{Q}$. Therefore, according to Eq. (\ref{equation_metric_MDS_M_I}), we have:
\begin{align}\label{equation_metric_MDS_Q}
\therefore ~~~ \b{V}^\top \b{Q} = \b{I} \implies \b{Q} = \b{V}.
\end{align}

According to Eq. (\ref{equation_classical_MDS_Y_decompose}), we have:
\begin{align}
\b{Y}^\top \b{Y} = \b{Q} \b{\Psi} \b{Q}^\top &\overset{(a)}{=} \b{Q} \b{\Psi}^{\frac{1}{2}} \b{\Psi}^{\frac{1}{2}} \b{Q}^\top \implies \b{Y} = \b{\Psi}^{\frac{1}{2}} \b{Q}^\top \nonumber \\
&\overset{(\ref{equation_metric_MDS_Psi}),(\ref{equation_metric_MDS_Q})}{\implies} \b{Y} = \b{\Delta}^{\frac{1}{2}} \b{V}^\top, \label{equation_metric_MDS_Y}
\end{align}
where $(a)$ can be done because $\b{\Psi}$ does not include negative entry as the gram matrix $\b{Y}^\top \b{Y}$ is positive semi-definite by definition. 

In summary, for embedding $\b{X}$ using classical MDS, the eigenvalue decomposition of $\b{X}^\top \b{X}$ is obtained as in Eq. (\ref{equation_classical_MDS_X_decompose}). Then, using Eq. (\ref{equation_metric_MDS_Y}), $\b{Y} \in \mathbb{R}^{n \times n}$ is obtained. Truncating this $\b{Y}$ to have $\b{Y} \in \mathbb{R}^{p \times n}$, with the first (top) $p$ rows, gives us the $p$-dimensional embedding of the $n$ points. Note that the leading $p$ columns are used because singular values are sorted from largest to smallest in SVD which can be used for Eq. (\ref{equation_classical_MDS_X_decompose}). 

\subsubsection{Generalized Classical MDS (Kernel Classical MDS)}\label{section_generalized_classical_MDS}

If $d_{ij}^2 = ||\b{x}_i - \b{x}_j||_2^2$ is the squared Euclidean distance between $\b{x}_i$ and $\b{x}_j$, we have:
\begin{align*}
d_{ij}^2 &= ||\b{x}_i - \b{x}_j||_2^2 = (\b{x}_i - \b{x}_j)^\top (\b{x}_i - \b{x}_j) \\
&= \b{x}_i^\top \b{x}_i - \b{x}_i^\top \b{x}_j - \b{x}_j^\top \b{x}_i + \b{x}_j^\top \b{x}_j \\
&= \b{x}_i^\top \b{x}_i - 2\b{x}_i^\top \b{x}_j + \b{x}_j^\top \b{x}_j = \b{G}_{ii} - 2 \b{G}_{ij} + \b{G}_{jj},
\end{align*}
where $\mathbb{R}^{n \times n} \ni \b{G} := \b{X}^\top \b{X}$ is the Gram matrix. If $\mathbb{R}^n \ni \b{g} := [\b{g}_1, \dots, \b{g}_n] = [\b{G}_{11}, \dots, \b{G}_{nn}] = \textbf{diag}(\b{G})$, we have:
\begin{align*}
& d_{ij}^2 = \b{g}_i -2\b{G}_{ij} + \b{g}_j, \\
& \b{D} = \b{g}\b{1}^\top -2 \b{G} +\b{1}\b{g}^\top = \b{1}\b{g}^\top -2 \b{G} + \b{g}\b{1}^\top,
\end{align*}
where $\b{1}$ is the vector of ones and $\b{D}$ is the distance matrix with squared Euclidean distance ($d_{ij}^2$ as its elements). 
Let $\mathbb{R}^{n \times n} \ni \b{H} := \b{I} - \frac{1}{n}\b{1}\b{1}^\top$ denote the centering matrix. 
We double-center the matrix $\b{D}$ as follows \cite{oldford2018lecture}:
\begin{align*}
\b{HDH} &= (\b{I} - \frac{1}{n}\b{1}\b{1}^\top) \b{D} (\b{I} - \frac{1}{n}\b{1}\b{1}^\top) \\
&= (\b{I} - \frac{1}{n}\b{1}\b{1}^\top) (\b{1}\b{g}^\top -2 \b{G} + \b{g}\b{1}^\top) (\b{I} - \frac{1}{n}\b{1}\b{1}^\top) \\
&= \big[\underbrace{(\b{I} - \frac{1}{n}\b{1}\b{1}^\top)\b{1}}_{=\,\b{0}} \b{g}^\top -2 (\b{I} - \frac{1}{n}\b{1}\b{1}^\top)\b{G}\\ 
&~~~~~ + (\b{I} - \frac{1}{n}\b{1}\b{1}^\top)\b{g}\b{1}^\top\big] (\b{I} - \frac{1}{n}\b{1}\b{1}^\top) \\
&= -2 (\b{I} - \frac{1}{n}\b{1}\b{1}^\top)\b{G}(\b{I} - \frac{1}{n}\b{1}\b{1}^\top) \\
&~~~~~ + (\b{I} - \frac{1}{n}\b{1}\b{1}^\top)\b{g}\underbrace{\b{1}^\top(\b{I} - \frac{1}{n}\b{1}\b{1}^\top)}_{=\,\b{0}} \\
&= -2 (\b{I} - \frac{1}{n}\b{1}\b{1}^\top)\b{G}(\b{I} - \frac{1}{n}\b{1}\b{1}^\top) = -2\,\b{HGH}
\end{align*}
\begin{align}\label{equation_linearKernel_and_distanceMAtrix_1}
\therefore~~~~~~~~ \b{HGH} = \b{H}\b{X}^\top\b{X}\b{H} = -\frac{1}{2} \b{HDH}.
\end{align}
Note that $(\b{I} - \frac{1}{n}\b{1}\b{1}^\top)\b{1} = \b{0}$ and $\b{1}^\top(\b{I} - \frac{1}{n}\b{1}\b{1}^\top) = \b{0}$ because removing the row mean of $\b{1}$ and column mean of of $\b{1}^\top$ results in the zero vectors, respectively.

If data $\b{X}$ are already centered, i.e., the mean has been removed ($\b{X} \gets \b{X}\b{H}$), Eq. (\ref{equation_linearKernel_and_distanceMAtrix_1}) becomes:
\begin{align}\label{equation_linearKernel_and_distanceMAtrix_2}
\b{X}^\top\b{X} = -\frac{1}{2} \b{HDH}.
\end{align}

\begin{corollary}
If using Eq. (\ref{equation_classical_MDS_X_decompose}) as Gram matrix, the classical MDS uses the Euclidean distance as its metric. Because of using Euclidean distance, the classical MDS using Gram matrix is a \underline{\textbf{linear}} subspace learning method. 
\end{corollary}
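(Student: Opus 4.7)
The plan is to establish both halves of the corollary by leaning on two results already proved in the excerpt: the double-centering identity of Eq. (\ref{equation_linearKernel_and_distanceMAtrix_2}) for the first claim, and the closed form of Eq. (\ref{equation_metric_MDS_Y}) for the second.

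For the first half, I would invoke Eq. (\ref{equation_linearKernel_and_distanceMAtrix_2}) to note that once $\b{X}$ is centered, $\b{X}^\top \b{X}$ equals $-\frac{1}{2}\b{HDH}$ where $\b{D}$ contains the squared Euclidean distances. Thus eigendecomposing $\b{X}^\top \b{X}$ as in Eq. (\ref{equation_classical_MDS_X_decompose}) is, up to the fixed centering map $\b{H}$, the same as eigendecomposing the Euclidean distance matrix; one could run an identical algorithm starting from $\b{D}$ and recover the same $\b{V}$ and $\b{\Delta}$. The metric secretly driving the algorithm is therefore Euclidean distance, which proves the first assertion.

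For the linearity half, I would combine Eq. (\ref{equation_metric_MDS_Y}), $\b{Y} = \b{\Delta}^{1/2}\b{V}^\top$, with the SVD interpretation $\b{X} = \b{U}\b{\Sigma}\b{V}^\top$ flagged just after Eq. (\ref{equation_classical_MDS_Y_decompose}), where $\b{\Sigma}^2 = \b{\Delta}$. This gives $\b{Y} = \b{\Sigma}\b{V}^\top = \b{U}^\top \b{X}$, and after truncating to the top $p$ rows, $\b{Y} = \b{U}_p^\top \b{X}$. Each embedded point is therefore a linear function of the corresponding input point; the recipe coincides with PCA on the centered data and is hence a linear subspace learning method.

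The main obstacle is rhetorical rather than computational: the phrase ``because of using Euclidean distance'' might suggest that any method fed Euclidean dissimilarities is automatically linear, which is false (metric MDS with Euclidean inputs but a non-quadratic stress is already a counterexample). I would therefore take care to frame the implication as specific to the Gram-matrix recipe of classical MDS, where the algebra leading to Eq. (\ref{equation_metric_MDS_Y}) forces the embedding to factor through the linear map $\b{X} \mapsto \b{U}_p^\top \b{X}$, so that the linearity follows not from the Euclidean metric alone but from the linear manner in which the Gram matrix encodes that metric.
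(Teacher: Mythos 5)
Your proposal is correct, and its first half is exactly the paper's argument: the paper's proof simply cites Eq.~(\ref{equation_linearKernel_and_distanceMAtrix_2}) to restate the Gram matrix $\b{X}^\top\b{X}$ of Eq.~(\ref{equation_classical_MDS_X_decompose}) as $-\frac{1}{2}\b{HDH}$ with $\b{D}$ the squared-Euclidean-distance matrix, and then concludes ``uses Euclidean distance and is linear, consequently.'' Where you genuinely diverge is the linearity half. The paper leaves linearity as an immediate consequence of the Euclidean restatement, whereas you derive it explicitly from the closed form of Eq.~(\ref{equation_metric_MDS_Y}): writing $\b{X}=\b{U}\b{\Sigma}\b{V}^\top$ with $\b{\Delta}=\b{\Sigma}^\top\b{\Sigma}$ gives (after truncation to $p\le d$ rows) $\b{Y}=\b{U}_p^\top\b{X}$, so the embedding factors through a fixed linear projection of the inputs --- this is essentially the PCA-equivalence argument the paper only makes later, in the proposition of Section~\ref{section_PCA_MDS_equivalence}. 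Your added caveat is well taken: ``Euclidean distance $\Rightarrow$ linear'' is not a valid implication in general (metric MDS with Euclidean $d_x$ is the paper's own example of a nonlinear method), so the honest justification is the closed-form linear map, not the metric. One small point of care: the untruncated $\b{Y}=\b{\Delta}^{1/2}\b{V}^\top$ is $n\times n$ while $\b{U}^\top\b{X}=\b{\Sigma}\b{V}^\top$ is $d\times n$, so the identity $\b{Y}=\b{U}^\top\b{X}$ holds only up to the trailing zero rows of $\b{\Delta}^{1/2}\b{V}^\top$; since $p\le d$ the truncated statement $\b{Y}=\b{U}_p^\top\b{X}$ is exactly right, and this looseness is shared by the paper itself. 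Net effect: your route proves a sharper statement (explicit linearity of the map) at the cost of invoking the SVD machinery earlier than the paper does.
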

\begin{proof}
The Eq. (\ref{equation_classical_MDS_X_decompose}) in classical MDS is the eigenvalue decomposition of the Gram matrix $\b{X}^\top \b{X}$. According to Eq. (\ref{equation_linearKernel_and_distanceMAtrix_2}), this Gram matrix can be restated to an expression based on squared Euclidean distance. Hence, the classical MDS with Eq. (\ref{equation_classical_MDS_X_decompose}) uses Euclidean distance and is linear, consequently. 
\end{proof}

In Eq. (\ref{equation_linearKernel_and_distanceMAtrix_1}) or (\ref{equation_linearKernel_and_distanceMAtrix_2}), we can write a general kernel matrix \cite{hofmann2008kernel} rather than the double-centered Gram matrix, to have \cite{cox2008multidimensional}:
\begin{align}\label{equation_generalKernel_and_distanceMAtrix}
\mathbb{R}^{n \times n} \ni \b{K} = -\frac{1}{2} \b{HDH}.
\end{align}
Note that the classical MDS with Eq. (\ref{equation_classical_MDS_X_decompose}) is using a linear kernel $\b{X}^\top \b{X}$ for its kernel. This is another reason for why classical MDS with Eq. (\ref{equation_classical_MDS_X_decompose}) is a linear method. 
It is also noteworthy that Eq. (\ref{equation_generalKernel_and_distanceMAtrix}) can be used for unifying the spectral dimensionality reduction methods as special cases of kernel principal component analysis with different kernels. See \cite{ham2004kernel,bengio2004learning} and {\citep[Table 2.1]{strange2014open}} for more details. 

Comparing Eqs. (\ref{equation_linearKernel_and_distanceMAtrix_1}), (\ref{equation_linearKernel_and_distanceMAtrix_2}), and (\ref{equation_generalKernel_and_distanceMAtrix}) with Eq. (\ref{equation_classical_MDS_X_decompose}) shows that we can use a general kernel matrix, like Radial Basis Function (RBF) kernel, in classical MDS to have \textit{generalized classical MDS}. 
In summary, for embedding $\b{X}$ using classical MDS, the eigenvalue decomposition of the kernel matrix $\b{K}$ is obtained similar to Eq. (\ref{equation_classical_MDS_X_decompose}):
\begin{align}\label{equation_classical_MDS_kernel_decompose}
& \b{K} = \b{V} \b{\Delta} \b{V}^\top.
\end{align}
Then, using Eq. (\ref{equation_metric_MDS_Y}), $\b{Y} \in \mathbb{R}^{n \times n}$ is obtained. 
It is noteworthy that in this case, we are replacing $\b{X}^\top \b{X}$ with the kernel $\b{K} = \b{\Phi}(\b{X})^\top \b{\Phi}(\b{X})$ and then, according to Eqs. (\ref{equation_metric_MDS_Y}) and (\ref{equation_classical_MDS_kernel_decompose}), we have:
\begin{align}\label{equation_classical_MDS_kernel_inTermsOf_Y}
\b{K} = \b{Y}^\top \b{Y}.
\end{align}
Truncating the $\b{Y}$, obtained from Eq. (\ref{equation_metric_MDS_Y}), to have $\b{Y} \in \mathbb{R}^{p \times n}$, with the first (top) $p$ rows, gives us the $p$-dimensional embedding of the $n$ points. 
It is noteworthy that, because of using kernel in the generalized classical MDS, one can name it the \textit{kernel classical MDS}. 

\subsubsection{Equivalence of PCA and kernel PCA with Classical MDS and Generalized Classical MDS, Respectively}\label{section_PCA_MDS_equivalence}

\begin{proposition}
Classical MDS with Euclidean distance is equivalent to Principal Component Analysis (PCA). Moreover, the generalized classical MDS is equivalent to kernel PCA. 
\end{proposition}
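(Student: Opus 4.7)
The plan is to route both equivalences through the singular value decomposition of $\b{X}$ (or of the feature-mapped data $\b{\Phi}(\b{X})$), since both PCA and classical MDS can be read off from the same SVD, they just pick up the left versus the right singular vectors.

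First I would recall that PCA (on centered data) projects $\b{X}$ onto the top $p$ eigenvectors of the covariance-like matrix $\b{X}\b{X}^\top \in \mathbb{R}^{d \times d}$, whereas classical MDS, by Eq.~(\ref{equation_classical_MDS_X_decompose}), diagonalizes the Gram matrix $\b{X}^\top \b{X} \in \mathbb{R}^{n \times n}$. Writing $\b{X} = \b{U}\b{\Sigma}\b{V}^\top$, I get $\b{X}\b{X}^\top = \b{U}\b{\Sigma}^2\b{U}^\top$ and $\b{X}^\top\b{X} = \b{V}\b{\Sigma}^2\b{V}^\top$, so both decompositions share the same nonzero eigenvalues $\b{\Delta} = \b{\Sigma}^2$ (with $\b{V}$ here playing the role of the right singular vectors as noted after Eq.~(\ref{equation_classical_MDS_Y_decompose})).

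The key step is then to write out the PCA embedding explicitly as $\b{U}^\top \b{X}$ and simplify:
\begin{align*}
\b{U}^\top \b{X} = \b{U}^\top \b{U}\b{\Sigma}\b{V}^\top = \b{\Sigma}\b{V}^\top = \b{\Delta}^{\frac{1}{2}} \b{V}^\top,
\end{align*}
which is precisely the classical MDS embedding given by Eq.~(\ref{equation_metric_MDS_Y}). Truncating to the top $p$ components on both sides gives identical $p$-dimensional coordinates, establishing the first claim. I would briefly remark that this equivalence relies on the data being centered, which is exactly the hypothesis under which Eq.~(\ref{equation_linearKernel_and_distanceMAtrix_2}) was derived.

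For the second claim, I would lift the same argument into feature space. Kernel PCA replaces $\b{X}$ with $\b{\Phi}(\b{X})$ and, via the kernel trick, extracts the embedding by diagonalizing $\b{K} = \b{\Phi}(\b{X})^\top\b{\Phi}(\b{X})$ rather than the (possibly infinite-dimensional) $\b{\Phi}(\b{X})\b{\Phi}(\b{X})^\top$. Writing $\b{\Phi}(\b{X}) = \b{U}'\b{\Sigma}'\b{V}'^\top$, the kernel PCA projection $\b{U}'^\top \b{\Phi}(\b{X}) = \b{\Sigma}'\b{V}'^\top = \b{\Delta}^{\frac{1}{2}}\b{V}^\top$ coincides with the generalized classical MDS embedding obtained from Eqs.~(\ref{equation_classical_MDS_kernel_decompose}) and~(\ref{equation_metric_MDS_Y}). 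The main obstacle here is conceptual rather than computational: one must justify that the kernel-PCA projection coefficients on the (unknown) feature-space principal directions really are the rows of $\b{\Delta}^{\frac{1}{2}}\b{V}^\top$, which follows because the eigenvectors of $\b{\Phi}(\b{X})\b{\Phi}(\b{X})^\top$ lie in the span of the columns of $\b{\Phi}(\b{X})$ and their coefficients are, up to the $\b{\Delta}^{1/2}$ rescaling, exactly the eigenvectors $\b{V}$ of $\b{K}$. Once that standard kernel-PCA identity is in hand, the equivalence with generalized classical MDS is immediate.
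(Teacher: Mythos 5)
Your proposal is correct and follows essentially the same route as the paper: both identify the classical MDS embedding $\b{\Delta}^{\frac{1}{2}}\b{V}^\top$ of Eq.~(\ref{equation_metric_MDS_Y}) with the PCA (resp.\ kernel PCA) projection obtained from the SVD of $\b{X}$ (resp.\ the eigendecomposition of $\b{K}=\b{\Phi}(\b{X})^\top\b{\Phi}(\b{X})$). The only difference is that you derive the PCA and kernel PCA projection formulas explicitly (via $\b{U}^\top\b{X}=\b{\Sigma}\b{V}^\top$ and the span argument for feature-space eigenvectors), whereas the paper cites them from an external reference; your version is self-contained and also makes the $\b{\Delta}^{1/2}$ scaling unambiguous.
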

\begin{proof}
On one hand, the Eq. (\ref{equation_classical_MDS_X_decompose}) can be obtained by the SVD of $\b{X}$. The projected data onto classical MDS subspace is obtained by Eq. (\ref{equation_metric_MDS_Y}) which is $\b{\Delta} \b{V}^\top$. 
On the other hand, according to {\citep[Eq. 42]{ghojogh2019unsupervised}}, the projected data onto PCA subspace is $\b{\Delta} \b{V}^\top$ where $\b{\Delta}$ and $\b{V}^\top$ are from the SVD of $\b{X}$. Comparing these shows that classical MDS is equivalent to PCA.  

Moreover, Eq. (\ref{equation_classical_MDS_kernel_decompose}) is the eigenvalue decomposition of the kernel matrix. The projected data onto the generalized classical MDS subspace is obtained by Eq. (\ref{equation_metric_MDS_Y}) which is $\b{\Delta} \b{V}^\top$. 
According to {\citep[Eq. 62]{ghojogh2019unsupervised}}, the projected data onto the kernel PCA subspace is $\b{\Delta} \b{V}^\top$ where $\b{\Delta}$ and $\b{V}^\top$ are from the eigenvalue decomposition of the kernel matrix; see {\citep[Eq. 61]{ghojogh2019unsupervised}}. Comparing these shows that the generalized classical MDS is equivalent to kernel PCA.  
\end{proof}

\subsection{Metric Multidimensional Scaling}

% The metric MDS is the default and usual MDS, which is often used \cite{jung2013lecture}. 

Recall that the classical MDS tries to preserve the similarities of points in the embedding space.
In later approaches after classical MDS, the cost function was changed to preserve the distances rather than the similarities \cite{lee2007nonlinear,bunte2012general}. 
\textit{Metric MDS} has this opposite view and tries to preserve the distances of points in the embedding space \cite{beals1968foundations}. For this, it minimizes the difference of distances of points in the input and embedding spaces \cite{ghodsi2006dimensionality}. The cost function in metric MDS is usually referred to as the \textit{stress function} \cite{mardia1978some,de2011multidimensional}. 
This method is named metric MDS because it uses distance metric in its optimization. 
The optimization in metric MDS is: 
\begin{equation}\label{equation_optimization_metric_MDS}
\begin{aligned}
& \underset{\{\b{y}_i\}_{i=1}^n}{\text{minimize}} \\ 
& c_2 := \Bigg( \frac{\sum_{i=1}^n \sum_{j=1, j < i}^n \big(d_x(\b{x}_i, \b{x}_j) - d_y(\b{y}_i, \b{y}_j)\big)^2}{\sum_{i=1}^n \sum_{j=1, j < i}^n d_x^2(\b{x}_i, \b{x}_j)} \Bigg)^{\frac{1}{2}},
\end{aligned}
\end{equation}
or, without the normalization factor:
\begin{equation}\label{equation_optimization_metric_MDS_2}
\begin{aligned}
& \underset{\{\b{y}_i\}_{i=1}^n}{\text{minimize}} \\ 
& c_2 := \Bigg( \sum_{i=1}^n \sum_{j=1, j < i}^n \big(d_x(\b{x}_i, \b{x}_j) - d_y(\b{y}_i, \b{y}_j)\big)^2 \Bigg)^{\frac{1}{2}},
\end{aligned}
\end{equation}
where $d_x(.,.)$ and $d_y(.,.)$ denote the distance metrics in the input and the embedded spaces, respectively. 

The Eqs. (\ref{equation_optimization_metric_MDS}) and (\ref{equation_optimization_metric_MDS_2}) use indices $j < i$ rather than $j \neq i$ because the distance metric is symmetric and it is not necessary to consider the distance of the $j$-th point from the $i$-th point when we already have considered the distance of the $i$-th point from the $j$-th point.
Note that in Eq. (\ref{equation_optimization_metric_MDS}) and (\ref{equation_optimization_metric_MDS_2}), $d_y$ is usually the Euclidean distance, i.e. $d_y = \|\b{y}_i - \b{y}_j\|_2$, while $d_x$ can be any valid metric distance such as the Euclidean distance. 

The optimization problem (\ref{equation_optimization_metric_MDS}) can be solved using either gradient descent or Newton's method. Note that the classical MDS is a linear method and has a closed-form solution; however, the metric and non-metric MDS methods are \underline{\textbf{nonlinear}} but do \textit{not have closed-form solutions} and should be solved iteratively. 
Note that in mathematics, whenever you get something, you lose something. Likewise, here, the method has become nonlinear but lost its closed form solution and became iterative. 

Inspired by \cite{sammon1969nonlinear}, we can use diagonal quasi-Newton's method for solving this optimization problem. If we consider the vectors component-wise, the diagonal quasi-Newton's method updates the solution as \cite{lee2007nonlinear}:
\begin{align}\label{equation_diagonal_Newton_method}
y_{i,k}^{(\nu+1)} := y_{i,k}^{(\nu)} - \eta\, \Big|\frac{\partial^2 c_2}{\partial y_{i,k}^2}\Big|^{-1}\, \frac{\partial c_2}{\partial y_{i,k}},
\end{align}
where $\eta$ is the learning rate, $y_{i,k}$ is the $k$-th element of the $i$-th embedded point $\mathbb{R}^p \ni \b{y}_i = [y_{i,1}, \dots, y_{i,p}]^\top$, and $|\!\cdot\!|$ is the absolute value guaranteeing that we move toward the minimum and not maximum in the Newton's method. 
If using gradient descent for solving the optimization, we update the solution as:
\begin{align}
y_{i,k}^{(\nu+1)} := y_{i,k}^{(\nu)} - \eta\, \frac{\partial c_2}{\partial y_{i,k}}.
\end{align}

\subsection{Non-Metric Multidimensional Scaling}

In \textit{non-metric MDS}, rather than using a distance metric, $d_y(\b{x}_i, \b{x}_j)$, for the distances between points in the embedding space, we use $f(d_y(\b{x}_i, \b{x}_j))$ where $f(.)$ is a non-parametric monotonic function. In other words, only the order of dissimilarities is important rather than the amount of dissimilarities \cite{agarwal2007generalized,jung2013lecture}:
\begin{equation}
\begin{aligned}
& d_y(\b{y}_i, \b{y}_j) \leq d_y(\b{y}_k, \b{y}_\ell) \Longleftrightarrow \\
&~~~~~~~~~~~~~~~ f(d_y(\b{y}_i, \b{y}_j)) \leq f(d_y(\b{x}_k, \b{y}_\ell)). 
\end{aligned}
\end{equation}

The optimization in non-metric MDS is \cite{agarwal2007generalized}: 
\begin{equation}\label{equation_optimization_nonmetric_MDS}
\begin{aligned}
& \underset{\{\b{y}_i\}_{i=1}^n}{\text{minimize}}~~~~ c_3 :=  \\ 
& \Bigg( \frac{\sum_{i=1}^n \sum_{j=1, j < i}^n \big(d_x(\b{x}_i, \b{x}_j) - f(d_y(\b{y}_i, \b{y}_j))\big)^2}{\sum_{i=1}^n \sum_{j=1, j < i}^n d_x^2(\b{x}_i, \b{x}_j)} \Bigg)^{\frac{1}{2}}.
\end{aligned}
\end{equation}
An examples of non-metric MDS is Smallest Space Analysis \cite{schlesinger1969smallest}. Another example is Kruskal's non-metric MDS or Shepard-Kruskal Scaling (SKS) \cite{kruskal1964non,kruskal1964multidimensional}. In Kruskal's non-metric MDS, the function $f(.)$ is the regression, where $f(d_y(\b{y}_i, \b{y}_j))$ is predicted from regression which preserves the order of dissimilarities \cite{holland2008non,agarwal2007generalized}. The Eq. (\ref{equation_optimization_nonmetric_MDS}) with $f(.)$ as the regression function, which is used in Kruskal's non-metric MDS, is called \textit{Stress-1 formula} \cite{agarwal2007generalized,holland2008non,jung2013lecture}.

\section{Sammon Mapping}\label{section_Sammon_mapping}

Sammon mapping \cite{sammon1969nonlinear} is a special case of metric MDS; hence, it is a \underline{\textbf{nonlinear}} method. It is probably correct to call this method the first proposed nonlinear method for manifold learning \cite{ghojogh2019roweis}. 

This method has different names in the literature such as \textit{Sammon's nonlinear mapping}, \textit{Sammon mapping}, and \textit{Nonlinear Mapping (NLM)} \cite{lee2007nonlinear}. Sammon originally named it NLM \cite{sammon1969nonlinear}. Its most well-known name is Sammon mapping. 

% Recall that in MDS, the cost function is:
% \begin{equation}
% \begin{aligned}
% & \underset{\b{Y}}{\text{minimize}}
% & & \sum_{i=1}^n \sum_{j=1}^n (\b{x}_i^\top \b{x}_j - \b{y}_i^\top \b{y}_j)^2,
% \end{aligned}
% \end{equation}
% which tries to preserve the similarities (gram matrices) of the data points in the embedded space.
% In later approaches after MDS, the cost function was changed to \cite{lee2007nonlinear,bunte2012general}:

The optimization problem in Sammon mapping is almost a weighted version of Eq. (\ref{equation_optimization_metric_MDS}), formulated as:
\begin{equation}\label{equation_MDS_distanceBased}
\begin{aligned}
& \underset{\{\b{y}_i\}_{i=1}^n}{\text{minimize}}
& & \frac{1}{a} \sum_{i=1}^n \sum_{j=1, j < i}^n w_{ij} \big(d_x(\b{x}_i, \b{x}_j) - d_y(\b{y}_i, \b{y}_j)\big)^2,
\end{aligned}
\end{equation}
where $w_{ij}$ is the weight and $a$ is the normalizing factor.
The $d_x(.,.)$ can be any metric but usually is considered to be Euclidean distance for simplicity \cite{lee2007nonlinear}. The $d_y(.,.)$, however, is Euclidean distance metric. 

In Sammon mapping, the weights and the normalizing factor in Eq. (\ref{equation_MDS_distanceBased}) are:
\begin{align}
& w_{ij} = \frac{1}{d_x(\b{x}_i, \b{x}_j)}, \label{equation_Sammon_weight} \\
& a = \sum_{i=1}^n \sum_{j=1, j < i}^n d_x(\b{x}_i, \b{x}_j). \label{equation_Sammon_normalizingFactor}
\end{align}
The weight $w_{ij}$ in Sammon mapping is giving more credit to the small distances (neighbor points) focusing on preserving the ``local'' structure of the manifold; hence it fits the manifold locally \cite{saul2003think}. 

Substituting Eqs. (\ref{equation_Sammon_weight}) and (\ref{equation_Sammon_normalizingFactor}) in Eq. (\ref{equation_MDS_distanceBased}) gives:
\begin{equation}
\begin{aligned}
& \underset{\b{Y}}{\text{minimize}}
& & c_4 := \frac{1}{\sum_{i=1}^n \sum_{j=1, j < i}^n d_x(\b{x}_i, \b{x}_j)} \times \\
& &&\sum_{i=1}^n \sum_{j=1, j < i}^n \frac{\big(d_x(\b{x}_i, \b{x}_j) - d_y(\b{y}_i, \b{y}_j)\big)^2}{d_x(\b{x}_i, \b{x}_j)}.
\end{aligned}
\end{equation}
% Note that in Sammon's paper \cite{sammon1969nonlinear}, the objective function is:
% \begin{equation}
% \begin{aligned}
% & \underset{\b{Y}}{\text{minimize}}
% & & c := \frac{1}{\sum_{i=1}^n \sum_{j=1, j < i}^n d_x(\b{x}_i, \b{x}_j)} \times \\
% & &&\sum_{i=1}^n \sum_{j=1, j < i}^n \frac{\big(d_x(\b{x}_i, \b{x}_j) - d_y(\b{y}_i, \b{y}_j)\big)^2}{d_x(\b{x}_i, \b{x}_j)},
% \end{aligned}
% \end{equation}
% which uses indices $j < i$ rather than $j \neq i$ because the distance metric is symmetric and it is not necessary to consider the distance of the $j$-th point from the $i$-th point when we already have considered the distance of the $i$-th point from the $j$-th point.

Sammon used diagonal quasi-Newton's method for solving this optimization problem \cite{sammon1969nonlinear}. Hence, Eq. (\ref{equation_diagonal_Newton_method}) is utilized. 
The learning rate $\eta$ is named the \textit{magic factor} in \cite{sammon1969nonlinear}.
For solving optimization, both gradient and second derivative are required. In the following, we derive these two. 

Note that, in practice, the classical MDS or PCA is used for initialization of points in Sammon mapping optimization. 

% If we consider the vectors component-wise, the diagonal quasi-Newton's method updates the solution as \cite{lee2007nonlinear}:
% \begin{align}
% y_{i,k}^{(\nu+1)} := y_{i,k}^{(\nu)} - \eta\, \Big|\frac{\partial^2 c}{\partial y_{i,k}^2}\Big|^{-1}\, \frac{\partial c}{\partial y_{i,k}},
% \end{align}
% where $y_{i,k}$ is the $k$-th element of the $i$-th embedded point $\mathbb{R}^p \ni \b{y}_i = [y_{i,1}, \dots, y_{i,p}]^\top$ and $|.|$ is the absolute value guaranteeing that we move toward the minimum and not maximum in the Newton's method. 

% and is suggested to be $\eta \approx 0.3 \text{ or } 0.4$ \cite{sammon1969nonlinear}.

\begin{proposition}
The gradient of the cost function $c$ with respect to $y_{i,k}$ is \cite{sammon1969nonlinear,lee2007nonlinear}:
\begin{align}\label{equation_Sammon_gradient}
&\frac{\partial c_4}{\partial y_{i,k}} \nonumber \\
&= \frac{-2}{a} \sum_{i=1}^n \sum_{j=1, j < i}^n \frac{d_x(\b{x}_i, \b{x}_j) - d_y(\b{y}_i, \b{y}_j)}{d_x(\b{x}_i, \b{x}_j)\, d_y(\b{x}_i, \b{x}_j)} (y_{i,k} - y_{j,k}).
\end{align}
\end{proposition}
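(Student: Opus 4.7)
The plan is a direct calculation of the partial derivative by the chain rule, exploiting that the normalizing constant $a$ and the input-space distances $d_x(\b{x}_m, \b{x}_\ell)$ do not depend on $\b{Y}$ and so act as constants under differentiation. The only nontrivial object is therefore $\partial d_y(\b{y}_m, \b{y}_\ell)/\partial y_{i,k}$, which I would compute first in isolation.

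Writing $d_y(\b{y}_m, \b{y}_\ell) = \sqrt{\sum_{k'=1}^{p} (y_{m,k'} - y_{\ell,k'})^2}$, the derivative with respect to $y_{i,k}$ vanishes whenever $i \notin \{m, \ell\}$. When $i = m$, it equals $(y_{m,k} - y_{\ell,k})/d_y(\b{y}_m, \b{y}_\ell)$, and when $i = \ell$ it equals the negative of this. I would then differentiate each summand $(d_x - d_y)^2/d_x$ of $a\, c_4$ via the chain rule, producing a factor $2(d_x - d_y)/d_x$ times $-\partial d_y/\partial y_{i,k}$. Combining the two steps gives, for each pair appearing in the sum, a contribution proportional to $(d_x(\b{x}_i,\b{x}_j) - d_y(\b{y}_i,\b{y}_j))(y_{i,k} - y_{j,k}) / (d_x(\b{x}_i,\b{x}_j)\, d_y(\b{y}_i,\b{y}_j))$.

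The main bookkeeping obstacle is the asymmetric index constraint $j < i$ in the outer double sum: for a fixed $i$, the pairs involving $i$ split into those with the companion index below $i$ (appearing as $(m,\ell) = (i, j)$ with $j < i$) and those with the companion index above $i$ (appearing as $(m,\ell) = (j, i)$ with $i < j$). I would handle this by noting that in the second case the sign flip from $\partial d_y/\partial y_{i,k}$ is exactly cancelled by the sign flip in the displacement $y_{m,k} - y_{\ell,k} = y_{j,k} - y_{i,k}$, so that both cases contribute the same expression $(d_x - d_y)(y_{i,k} - y_{j,k})/(d_x\, d_y)$. Summing over both cases collapses to a single sum over $j \neq i$, which matches the symmetric form shown in the proposition after re-indexing.

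The remainder of the argument is purely algebraic: divide by $a$ and absorb the factor of $2$ and the overall minus sign to arrive at the stated expression. No convergence or smoothness subtleties arise, provided all $d_x(\b{x}_i, \b{x}_j)$ and $d_y(\b{y}_i, \b{y}_j)$ are strictly positive (i.e.\ no coincident points), which is the standing assumption under which the cost $c_4$ is even defined.
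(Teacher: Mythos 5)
Your proposal is correct and follows essentially the same chain-rule computation as the paper's proof: differentiate each summand $(d_x-d_y)^2/d_x$ to get a factor $2(d_x-d_y)/d_x$ times $-\partial d_y/\partial y_{i,k}$, with $\partial d_y(\b{y}_i,\b{y}_j)/\partial y_{i,k} = (y_{i,k}-y_{j,k})/d_y(\b{y}_i,\b{y}_j)$. If anything, your handling of the $j<i$ constraint (splitting the pairs containing the fixed index $i$ and observing that the two sign flips cancel, so the result collapses to a single sum over $j\neq i$, i.e.\ Sammon's original form) is more careful than the paper's proof, which carries the double sum through the chain rule without spelling out which terms actually depend on $y_{i,k}$.
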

\begin{proof}
Proof is according to \cite{lee2007nonlinear}.
According to chain rule, we have:
\begin{align*}
\frac{\partial c_4}{\partial y_{i,k}} = \frac{\partial c_4}{\partial d_y(\b{y}_i, \b{y}_j)} \times \frac{\partial d_y(\b{y}_i, \b{y}_j)}{\partial y_{i,k}}.
\end{align*}
The first derivative is:
\begin{align*}
\frac{\partial c_4}{\partial d_y(\b{y}_i, \b{y}_j)} = \frac{-2}{a} \sum_{i=1}^n \sum_{j=1, j < i}^n \frac{d_x(\b{x}_i, \b{x}_j) - d_y(\b{y}_i, \b{y}_j)}{d_x(\b{x}_i, \b{x}_j)},
\end{align*}
and using the chain rule, the second derivative is:
\begin{align*}
\frac{\partial d_y(\b{y}_i, \b{y}_j)}{\partial y_{i,k}} = \frac{\partial d_y(\b{y}_i, \b{y}_j)}{\partial d_y^2(\b{y}_i, \b{y}_j)} \times \frac{\partial d_y^2(\b{y}_i, \b{y}_j)}{\partial y_{i,k}}.
\end{align*}
We have:
\begin{align*}
\frac{\partial d_y(\b{y}_i, \b{y}_j)}{\partial d_y^2(\b{y}_i, \b{y}_j)} = 1 / \frac{\partial d_y^2(\b{y}_i, \b{y}_j)}{\partial d_y(\b{y}_i, \b{y}_j)} = 1 / (2 d_y(\b{y}_i, \b{y}_j)).
\end{align*}
Also we have:
\begin{align*}
d_y^2(\b{y}_i, \b{y}_j) = ||\b{y}_i - \b{y}_j||_2^2 = \sum_{k=1}^p (y_{i,k} - y_{j,k})^2.
\end{align*}
Therefore:
\begin{align*}
\frac{\partial d_y^2(\b{y}_i, \b{y}_j)}{\partial y_{i,k}} = 2\,(y_{i,k} - y_{j,k}),
\end{align*}
Therefore:
\begin{align}\label{equation_Sammon_derivative_dY}
\therefore ~~~ \frac{\partial d_y(\b{y}_i, \b{y}_j)}{\partial y_{i,k}} = \frac{y_{i,k} - y_{j,k}}{d_y(\b{y}_i, \b{y}_j)}.
\end{align}
Finally, we have:
\begin{align*}
&\therefore ~~~ \frac{\partial c_4}{\partial y_{i,k}} \nonumber \\
&= \frac{-2}{a} \sum_{i=1}^n \sum_{j=1, j < i}^n \frac{d_x(\b{x}_i, \b{x}_j) - d_y(\b{y}_i, \b{y}_j)}{d_x(\b{x}_i, \b{x}_j)\, d_y(\b{x}_i, \b{x}_j)} (y_{i,k} - y_{j,k}),
\end{align*}
which is the gradient mentioned in the proposition. Q.E.D.
\end{proof}

\begin{proposition}
The second derivative of the cost function $c$ with respect to $y_{i,k}$ is \cite{sammon1969nonlinear,lee2007nonlinear}:
\begin{alignat}{2}
&\frac{\partial^2 c_4}{\partial y_{i,k}^2} = \,&&\frac{-2}{a} \sum_{i=1}^n \sum_{j=1, j < i}^n \Big( \frac{d_x(\b{x}_i, \b{x}_j) - d_y(\b{y}_i, \b{y}_j)}{d_x(\b{x}_i, \b{x}_j)\, d_y(\b{x}_i, \b{x}_j)} \nonumber \\
& && - \frac{(y_{i,k} - y_{j,k})^2}{d_y^3(\b{y}_i, \b{y}_j)} \Big).
\end{alignat}
\end{proposition}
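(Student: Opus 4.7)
The plan is to differentiate the gradient from the previous proposition once more with respect to $y_{i,k}$. Writing $f_{ij} := d_x(\b{x}_i,\b{x}_j) - d_y(\b{y}_i,\b{y}_j)$ and $g_{ij} := d_x(\b{x}_i,\b{x}_j)\, d_y(\b{y}_i,\b{y}_j)$, the gradient is
\begin{equation*}
\frac{\partial c_4}{\partial y_{i,k}} = \frac{-2}{a}\sum_{i=1}^n\sum_{j<i} \frac{f_{ij}}{g_{ij}}\,(y_{i,k} - y_{j,k}),
\end{equation*}
so each summand is a product of three factors depending on $y_{i,k}$: the numerator $f_{ij}$, the denominator $g_{ij}$, and the explicit linear factor $(y_{i,k} - y_{j,k})$. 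I would apply the quotient/product rule to each summand, treating $d_x(\b{x}_i,\b{x}_j)$ as a constant with respect to $y_{i,k}$.

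The needed intermediate derivatives are $\partial f_{ij}/\partial y_{i,k} = -(y_{i,k}-y_{j,k})/d_y(\b{y}_i,\b{y}_j)$ and $\partial g_{ij}/\partial y_{i,k} = d_x(\b{x}_i,\b{x}_j)\,(y_{i,k}-y_{j,k})/d_y(\b{y}_i,\b{y}_j)$, both of which follow immediately from Eq.~(\ref{equation_Sammon_derivative_dY}) and the chain rule. Substituting these into the quotient-rule expansion of $\partial/\partial y_{i,k}\bigl[f_{ij}(y_{i,k}-y_{j,k})/g_{ij}\bigr]$ and multiplying through by $g_{ij}^2 = d_x^2 d_y^2$ in the denominator yields three terms per summand:
\begin{equation*}
\frac{f_{ij}}{d_x d_y} \;-\; \frac{(y_{i,k}-y_{j,k})^2}{d_x\, d_y^2} \;-\; \frac{f_{ij}(y_{i,k}-y_{j,k})^2}{d_x\, d_y^3}.
\end{equation*}

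The main (and only nontrivial) obstacle is the algebraic consolidation of the last two terms into the single term $-(y_{i,k}-y_{j,k})^2/d_y^3$ appearing in the claimed formula. The trick is to factor out $(y_{i,k}-y_{j,k})^2/(d_x d_y^3)$ from those two terms, which produces a bracket of the form $d_y + f_{ij}$; since $f_{ij} = d_x - d_y$, this bracket collapses to $d_x$, canceling the $d_x$ in the denominator and leaving precisely $(y_{i,k}-y_{j,k})^2/d_y^3$. After this cancellation, prepending the $-2/a$ prefactor and the double sum over $j<i$ gives the formula stated in the proposition. I expect the bookkeeping in the product-rule expansion, rather than any conceptual step, to be the chief source of potential sign or indexing errors, so I would lay out the three-factor product rule explicitly before simplifying.
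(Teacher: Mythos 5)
Your proposal is correct and takes essentially the same route as the paper: differentiate the gradient from the previous proposition term by term, treating $d_x(\b{x}_i,\b{x}_j)$ as constant and using $\partial d_y(\b{y}_i,\b{y}_j)/\partial y_{i,k} = (y_{i,k}-y_{j,k})/d_y(\b{y}_i,\b{y}_j)$. The only difference is cosmetic: the paper first rewrites $\frac{d_x-d_y}{d_x d_y}$ as $\frac{1}{d_x}\big(\frac{d_x}{d_y}-1\big)$ so that only $1/d_y$ needs differentiating, whereas you apply the full quotient rule and then collapse the two extra terms via $d_y + (d_x - d_y) = d_x$, reaching the same cancellation.
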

\begin{proof}
We have:
\begin{align*}
\frac{\partial^2 c_4}{\partial y_{i,k}^2} = \frac{\partial}{\partial y_{i,k}} \Big(\frac{\partial c_4}{\partial y_{i,k}}\Big),
\end{align*}
where $\partial c_4 / \partial y_{i,k}$ is Eq. (\ref{equation_Sammon_gradient}). Therefore:
\begin{align*}
\frac{\partial^2 c_4}{\partial y_{i,k}^2} = &\frac{-2}{a} \sum_{i=1}^n \sum_{j=1, j < i}^n \frac{\partial}{\partial y_{i,k}} \\
&\Big( \frac{d_x(\b{x}_i, \b{x}_j) - d_y(\b{y}_i, \b{y}_j)}{d_x(\b{x}_i, \b{x}_j)\, d_y(\b{x}_i, \b{x}_j)} (y_{i,k} - y_{j,k}) \Big).
\end{align*}
We have:
\begin{align*}
&\frac{\partial}{\partial y_{i,k}} \Big( \frac{d_x(\b{x}_i, \b{x}_j) - d_y(\b{y}_i, \b{y}_j)}{d_x(\b{x}_i, \b{x}_j)\, d_y(\b{x}_i, \b{x}_j)} (y_{i,k} - y_{j,k}) \Big) \\
&= (y_{i,k} - y_{j,k}) \frac{\partial}{\partial y_{i,k}} \Big( \frac{d_x(\b{x}_i, \b{x}_j) - d_y(\b{y}_i, \b{y}_j)}{d_x(\b{x}_i, \b{x}_j)\, d_y(\b{x}_i, \b{x}_j)} \Big) \\
& + \frac{d_x(\b{x}_i, \b{x}_j) - d_y(\b{y}_i, \b{y}_j)}{d_x(\b{x}_i, \b{x}_j)\, d_y(\b{x}_i, \b{x}_j)} \underbrace{\frac{\partial}{\partial y_{i,k}} (y_{i,k} - y_{j,k})}_{=1}.
\end{align*}
Note that:
\begin{align*}
&\frac{\partial}{\partial y_{i,k}} \Big( \frac{d_x(\b{x}_i, \b{x}_j) - d_y(\b{y}_i, \b{y}_j)}{d_x(\b{x}_i, \b{x}_j)\, d_y(\b{x}_i, \b{x}_j)} \Big) \\
&= \frac{1}{d_x(\b{x}_i, \b{x}_j)} \frac{\partial}{\partial y_{i,k}} \Big( \frac{d_x(\b{x}_i, \b{x}_j) - d_y(\b{y}_i, \b{y}_j)}{d_y(\b{x}_i, \b{x}_j)} \Big) \\
&= \frac{1}{d_x(\b{x}_i, \b{x}_j)} \frac{\partial}{\partial y_{i,k}} \Big( \frac{d_x(\b{x}_i, \b{x}_j)}{d_y(\b{x}_i, \b{x}_j)} - 1 \Big) \\
&= \underbrace{\frac{d_x(\b{x}_i, \b{x}_j)}{d_x(\b{x}_i, \b{x}_j)}}_{=1} \frac{\partial}{\partial y_{i,k}} \Big( \frac{1}{d_y(\b{x}_i, \b{x}_j)}\Big) - \underbrace{\frac{\partial}{\partial y_{i,k}} (1)}_{=0} \\
&= \frac{-1}{d_y^2(\b{x}_i, \b{x}_j)} \frac{\partial}{\partial y_{i,k}} ( d_y(\b{x}_i, \b{x}_j)) \\
&\overset{(\ref{equation_Sammon_derivative_dY})}{=} \frac{-1}{d_y^2(\b{x}_i, \b{x}_j)} \frac{y_{i,k} - y_{j,k}}{d_y(\b{y}_i, \b{y}_j)}.
\end{align*}
Therefore:
\begin{align*}
&\therefore ~~~ \frac{\partial}{\partial y_{i,k}} \Big( \frac{d_x(\b{x}_i, \b{x}_j) - d_y(\b{y}_i, \b{y}_j)}{d_x(\b{x}_i, \b{x}_j)\, d_y(\b{x}_i, \b{x}_j)} (y_{i,k} - y_{j,k}) \Big) \\
&= \frac{-(y_{i,k} - y_{j,k})^2}{d_y^3(\b{y}_i, \b{y}_j)} + \frac{d_x(\b{x}_i, \b{x}_j) - d_y(\b{y}_i, \b{y}_j)}{d_x(\b{x}_i, \b{x}_j)\, d_y(\b{x}_i, \b{x}_j)}.
\end{align*}
Therefore:
\begin{alignat*}{2}
&\therefore ~~~ \frac{\partial^2 c_4}{\partial y_{i,k}^2} = \,&&\frac{-2}{a} \sum_{i=1}^n \sum_{j=1, j < i}^n \Big( \frac{d_x(\b{x}_i, \b{x}_j) - d_y(\b{y}_i, \b{y}_j)}{d_x(\b{x}_i, \b{x}_j)\, d_y(\b{x}_i, \b{x}_j)} \nonumber \\
& && - \frac{(y_{i,k} - y_{j,k})^2}{d_y^3(\b{y}_i, \b{y}_j)} \Big),
\end{alignat*}
which is the derivative mentioned in the proposition. Q.E.D.
\end{proof}

It is noteworthy that for better time complexity of the Sammon mapping, one can use the $k$-Nearest Neighbors ($k$NN) rather than the whole data \cite{ghojogh2020quantile}:
\begin{equation}\label{equation_Sammon_with_neighbors}
\begin{aligned}
& \underset{\{\b{y}_i\}_{i=1}^n}{\text{minimize}}
& & \frac{1}{a} \sum_{i=1}^n \sum_{j \in \mathcal{N}_i}^n w_{ij} \big(d_x(\b{x}_i, \b{x}_j) - d_y(\b{y}_i, \b{y}_j)\big)^2,
\end{aligned}
\end{equation}
where $\mathcal{N}_i$ denotes the set of indices of $k$NN of the $i$-th point. 

\section{Isomap}\label{section_Isomap}

\subsection{Isomap}

\textit{Isomap} \cite{tenenbaum2000global} is a special case of the generalized classical MDS, explained in Section \ref{section_generalized_classical_MDS}. Rather than the Euclidean distance, Isomap uses an approximation of the geodesic distance. 
As was explained, the classical MDS is linear; hence, it cannot capture the nonlinearity of the manifold. 
Isomap makes use of the geodesic distance to make the generalized classical MDS nonlinear. 

\subsubsection{Geodesic Distance}

The \textit{geodesic distance} is the length of shortest path between two points on the possibly curvy manifold. 
It is ideal to use the geodesic distance; however, calculation of the geodesic distance is very difficult because it requires traversing from a point to another point on the manifold. This calculation requires differential geometry and Riemannian manifold calculations \cite{aubin2001course}.   
Therefore, Isomap approximates the geodesic distance by piece-wise Euclidean distances. It finds the $k$-Nearest Neighbors ($k$NN) graph of dataset. Then, the shortest path between two points, through their neighbors, is found using a shortest-path algorithm such as the Dijkstra algorithm or the Floyd-Warshal algorithm \cite{cormen2009introduction}. A sklearn function in python for this is ``graph\_shortest\_path'' from the package ``sklearn.utils.graph\_shortest\_path''. 
Note that the approximated geodesic distance is also refered to as the \textit{curvilinear distance} \cite{lee2002curvilinear}. 
The approximated geodesic distance can be formulated as \cite{bengio2004out}:
\begin{equation}\label{equation_geodesic_distance_matrix}
\b{D}^{(g)}_{ij} := \min_{\b{r}} \sum_{i = 2}^{l} \|\b{r}_i - \b{r}_{i+1}\|_2,
\end{equation}
where $l \geq 2$ is the length of sequence of points $\b{r}_i \in \{\b{x}_i\}_{i=1}^n$ and $\b{D}^{(g)}_{ij}$ denotes the $(i,j)$-th element of the geodesic distance matrix $\b{D}^{(g)} \in \mathbb{R}^{n \times n}$. 

\begin{figure}[!t]
\centering
\includegraphics[width=2.2in]{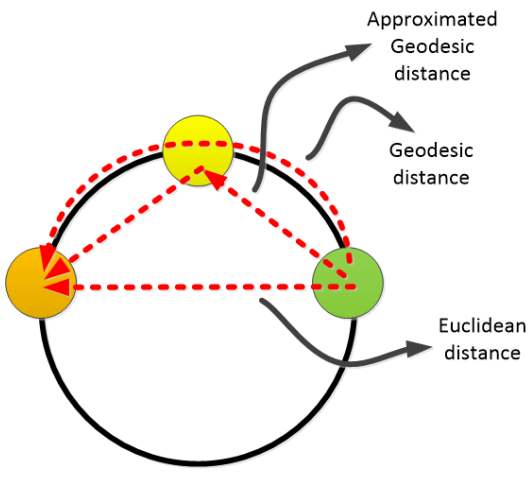}
\caption{An example of the Euclidean distance, geodesic distance, and approximated geodesic distance using piece-wise Euclidean distances.}
\label{figure_geodesic}
\end{figure}

An example of the Euclidean distance, geodesic distance, and the approximated geodesic distance using piece-wise Euclidean distances can be seen in Fig. \ref{figure_geodesic}. A real-world example is the distance between Toronto and Athens. The Euclidean distance is to dig the Earth from Toronto to reach Athens directly. The geodesic distance is to move from Toronto to Athens on the curvy Earth by the shortest path between two cities. The approximated geodesic distance is to dig the Earth from Toronto to London in UK, then dig from London to Frankfurt in Germany, then dig from Frankfurt to Rome in Italy, then dig from Rome to Athens. Calculations of lengths of paths in the approximated geodesic distance is much easier than the geodesic distance. 

\subsubsection{Isomap Formulation}

As was mentioned before, Isomap is a special case of the generalized classical MDS with the geodesic distance used. Hence, Isomap uses Eq. (\ref{equation_generalKernel_and_distanceMAtrix}) as:
\begin{align}\label{equation_Kernel_Isomap}
\mathbb{R}^{n \times n} \ni \b{K} = -\frac{1}{2} \b{H}\b{D}^{(g)}\b{H}.
\end{align}
It then uses Eqs. (\ref{equation_classical_MDS_kernel_decompose}) and (\ref{equation_metric_MDS_Y}) to embed the data. 
As Isomap uses the nonlinear geodesic distance in its kernel calculation, it is a \underline{\textbf{nonlinear}} method. 

\subsection{Kernel Isomap}\label{section_kernel_Isomap}

% \subsubsection{Kernel Isomap -- Version 1}

Consider $\b{K}(\b{D})$ to be Eq. (\ref{equation_generalKernel_and_distanceMAtrix}). Consequently, we have:
\begin{align}\label{equation_generalKernel_and_distanceMAtrix_squared}
\mathbb{R}^{n \times n} \ni \b{K}(\b{D}^2) = -\frac{1}{2} \b{H}\b{D}^2\b{H},
\end{align}
where $\b{D}$ is the geodesic distance matrix, defined by Eq. (\ref{equation_geodesic_distance_matrix}). 

Define the following equation {\citep[Section 2.2.8]{cox2008multidimensional}}:
\begin{align}\label{equation_kernelIsomap1_K_prime}
\mathbb{R}^{n \times n} \ni \b{K}' := \b{K}(\b{D}^2) + 2c\b{K}(\b{D}) + \frac{1}{2} c^2 \b{H}. 
\end{align}
According to \cite{cailliez1983analytical}, $\b{K}'$ is guaranteed to be positive semi-definite for $c \geq c^*$ where $c^*$ is the largest eigenvalue of the following matrix:
\begin{align}
\begin{bmatrix}
\b{0} & 2\b{K}(\b{D}^2) \\
-\b{I} & -4\b{K}(\b{D}) 
\end{bmatrix}
\in \mathbb{R}^{2n \times 2n}.
\end{align}
\textit{Kernel Isomap} \cite{choi2004kernel} chooses a value $c \geq c^*$ and uses $\b{K}'$ in Eq. (\ref{equation_classical_MDS_kernel_decompose}) and then uses Eq. (\ref{equation_metric_MDS_Y}) for embedding the data. 

% \subsubsection{Kernel Isomap -- Version 2}

% kernel isomap and out-of-sample \cite{gisbrecht2015parametric}

\section{Out-of-sample Extensions for MDS and Isomap}\label{section_outOfSample}

So far, we embedded the training dataset $\{\b{x}_i \in \mathbb{R}^d\}_{i=1}^n$ or $\b{X} = [\b{x}_1, \dots, \b{x}_n] \in \mathbb{R}^{d \times n}$ to have their embedding $\{\b{y}_i \in \mathbb{R}^p\}_{i=1}^n$ or $\b{Y} = [\b{y}_1, \dots, \b{y}_n] \in \mathbb{R}^{p \times n}$.
Assume we have some out-of-sample (test data), denoted by $\{\b{x}_i^{(t)} \in \mathbb{R}^d\}_{i=1}^{n_t}$ or $\b{X}_t = [\b{x}_1^{(t)}, \dots, \b{x}_n^{(t)}] \in \mathbb{R}^{d \times n_t}$. We want to find their embedding $\{\b{y}_i^{(t)} \in \mathbb{R}^p\}_{i=1}^{n_t}$ or $\b{Y}_t = [\b{y}_1^{(t)}, \dots, \b{y}_n^{(t)}] \in \mathbb{R}^{p \times n_t}$ after the training phase. 

\subsection{Out of Sample for Isomap and MDS Using Eigenfunctions}

\subsubsection{Eigenfunctions}

Consider a Hilbert space $\mathcal{H}_p$ of functions with the inner product $\langle f,g \rangle = \int f(x) g(x) p(x) dx$ with density function $p(x)$. In this space, we can consider the kernel function $K_p$:
\begin{align}
(K_p f)(x) = \int K(x,y)\, f(y)\, p(y)\, dy,
\end{align}
where the density function can be approximated empirically. 
The \textit{eigenfunction decomposition} is defined to be \cite{bengio2004learning,bengio2004out}:
\begin{align}
(K_p f_k)(x) = \delta'_k f_k(x),
\end{align}
where $f_k(x)$ is the $k$-th \textit{eigenfunction} and $\delta'_k$ is the corresponding eigenvalue.
If we have the eigenvalue decomposition \cite{ghojogh2019eigenvalue} for the kernel matrix $\b{K}$, we have $\b{K} \b{v}_k = \delta_k \b{v}_k$ (see Eq. (\ref{equation_classical_MDS_kernel_decompose})) where $\b{v}_k$ is the $k$-th eigenvector and $\delta_k$ is the corresponding eigenvalue. According to {\citep[Proposition 1]{bengio2004out}}, we have $\delta'_k = (1/n) \delta_k$. 

\subsubsection{Embedding Using Eigenfunctions}

\begin{proposition}
If $v_{ki}$ is the $i$-th element of the $n$-dimensional vector $\b{v}_k$ and $k(\b{x}, \b{x}_i)$ is the kernel between vectors $\b{x}$ and $\b{x}_i$, the eigenfunction for the point $\b{x}$ and the $i$-th training point $\b{x}_i$ are:
\begin{align}
f_k(\b{x}) &= \frac{\sqrt{n}}{\delta_k} \sum_{i=1}^n v_{ki}\, \breve{k}_t(\b{x}_i, \b{x}), \\
f_k(\b{x}_i) &= \sqrt{n}\, v_{ki}, 
\end{align}
respectively, where $\breve{k}_t(\b{x}_i, \b{x})$ is the centered kernel between training set and the out-of-sample point $\b{x}$. 

Let the MDS or Isomap embedding of the point $\b{x}$ be $\mathbb{R}^p \ni \b{y}(\b{x}) = [y_1(\b{x}), \dots, y_p(\b{x})]^\top$. The $k$-th dimension of this embedding is:
\begin{align}\label{equation_embedding_eigenfunction}
y_k(\b{x}) &= \sqrt{\delta_k}\, \frac{f_k(\b{x})}{\sqrt{n}} = \frac{1}{\sqrt{\delta_k}} \sum_{i=1}^n v_{ki}\, \breve{k}_t(\b{x}_i, \b{x}).
\end{align}
\end{proposition}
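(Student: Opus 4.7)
The plan is to derive all three formulas from the integral eigenfunction equation $(K_p f_k)(\b{x}) = \delta'_k f_k(\b{x})$ by replacing the unknown density $p$ with its empirical analogue on the training set, $p(\b{x}) = \frac{1}{n}\sum_{i=1}^{n} \mathbb{1}[\b{x} = \b{x}_i]$. Under this choice the integral collapses to a discrete sum and the eigenfunction equation becomes
\begin{align*}
\delta'_k f_k(\b{x}) \,=\, \frac{1}{n} \sum_{j=1}^n \breve{k}_t(\b{x}, \b{x}_j)\, f_k(\b{x}_j).
\end{align*}
This identity is the single engine behind the proposition; everything else is algebra.

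First I would restrict this relation to the training points themselves. Evaluating at $\b{x} = \b{x}_i$ for each $i$ and collecting $[f_k(\b{x}_1), \dots, f_k(\b{x}_n)]^\top =: \b{f}_k$ yields the matrix equation $\frac{1}{n} \b{K}\b{f}_k = \delta'_k \b{f}_k$. Using the known relation $\delta'_k = \delta_k/n$ between the integral-operator and Gram-matrix spectra, this reduces to $\b{K}\b{f}_k = \delta_k \b{f}_k$, so $\b{f}_k$ must be parallel to the eigenvector $\b{v}_k$ from Eq.~(\ref{equation_classical_MDS_kernel_decompose}); write $f_k(\b{x}_i) = c\, v_{ki}$. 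To pin down $c$, I would impose unit norm in $\mathcal{H}_p$: $\|f_k\|^2 = \frac{1}{n}\sum_i f_k^2(\b{x}_i) = 1$. Because $\b{v}_k$ is already a unit vector, this forces $c^2/n = 1$ and hence $c = \sqrt{n}$, giving the second displayed formula $f_k(\b{x}_i) = \sqrt{n}\, v_{ki}$.

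Next, to extend $f_k$ to an arbitrary (possibly out-of-sample) point $\b{x}$, I would substitute the training-point values just found back into the empirical eigenfunction equation. This gives
\begin{align*}
\delta'_k f_k(\b{x}) \,=\, \frac{1}{n} \sum_{j=1}^n \breve{k}_t(\b{x}, \b{x}_j)\, \sqrt{n}\, v_{kj},
\end{align*}
and solving for $f_k(\b{x})$ with $\delta'_k = \delta_k/n$ yields $f_k(\b{x}) = (\sqrt{n}/\delta_k)\sum_{j=1}^n v_{kj}\, \breve{k}_t(\b{x}_j, \b{x})$, which is the first claim. This step is essentially a Nyström extension: the training-set eigenvector entries and the kernel evaluations to $\b{x}$ determine the eigenfunction everywhere.

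Finally, for the embedding coordinate, I would appeal to the closed-form classical MDS solution derived earlier, $\b{Y} = \b{\Delta}^{1/2}\b{V}^\top$ from Eq.~(\ref{equation_metric_MDS_Y}), which for a training point gives $y_k(\b{x}_i) = \sqrt{\delta_k}\, v_{ki} = \sqrt{\delta_k}\, f_k(\b{x}_i)/\sqrt{n}$. I would then promote this algebraic identity from training points to arbitrary $\b{x}$ as the definition of the out-of-sample embedding, obtaining $y_k(\b{x}) = \sqrt{\delta_k}\, f_k(\b{x})/\sqrt{n}$; inserting the formula for $f_k(\b{x})$ then gives the stated $y_k(\b{x}) = (1/\sqrt{\delta_k}) \sum_{i=1}^n v_{ki}\, \breve{k}_t(\b{x}_i, \b{x})$. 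The main obstacle will not be any single calculation but rather keeping the normalization constants straight: correctly fixing $c = \sqrt{n}$ from the empirical inner product, and consistently using $\delta'_k = \delta_k/n$ so that the $\sqrt{n}$ and $\sqrt{\delta_k}$ factors line up across all three formulas.
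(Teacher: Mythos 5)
Your derivation is correct, but note that the paper itself does not prove this proposition at all: its ``proof'' is a pointer to Bengio et al.'s papers (Proposition 1 of the out-of-sample/spectral-embedding works), so you have effectively reconstructed the argument that lives in those references rather than diverged from anything in this paper. Your route is the standard one there: collapse the integral operator $K_p$ with the empirical density so the eigenfunction equation becomes $\frac{1}{n}\sum_j \breve{k}_t(\b{x},\b{x}_j) f_k(\b{x}_j) = \delta'_k f_k(\b{x})$, observe that its restriction to the training set is the Gram-matrix eigenproblem $\b{K}\b{f}_k = \delta_k \b{f}_k$ via $\delta'_k = \delta_k/n$, fix the constant $c=\sqrt{n}$ from the empirical norm $\frac{1}{n}\sum_i f_k^2(\b{x}_i)=1$, and then read off the Nystr\"om extension and combine it with the training embedding $\b{Y} = \b{\Delta}^{1/2}\b{V}^\top$ (Eq.~(\ref{equation_metric_MDS_Y})) to get $y_k(\b{x}) = \sqrt{\delta_k}\, f_k(\b{x})/\sqrt{n}$. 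The constants all line up as you claim. Two small points of care: the kernel entering the empirical operator must be the centered (data-dependent) kernel, so that on training pairs $\breve{k}_t(\b{x}_i,\b{x}_j)$ coincides with the entries of the centered $\b{K}$ used in Eq.~(\ref{equation_classical_MDS_kernel_decompose}) --- you use this implicitly and it is worth stating; and your final step, promoting $y_k(\b{x}_i) = \sqrt{\delta_k} f_k(\b{x}_i)/\sqrt{n}$ from training points to arbitrary $\b{x}$, is a definition of the out-of-sample embedding (justified in the cited works by convergence/consistency arguments) rather than something forced by the eigenfunction equation alone; flagging that distinction would make the write-up airtight. Compared with the paper, your version buys a self-contained proof at the cost of gliding over the functional-analytic details (existence of the eigenfunctions, the precise sense of $\delta'_k = \delta_k/n$) that the cited references handle.
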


\begin{proof}
This proposition is taken from {\citep[Proposition 1]{bengio2004out}}. For proof, refer to {\citep[Proposition 1]{bengio2004learning}}, {\citep[Proposition 1]{bengio2006spectral}}, and {\citep[Proposition 1 and Theorem 1]{bengio2003spectral}}. 
More complete proofs can be found in \cite{bengio2003learning}. 
\end{proof}

If we have a set of $n_t$ out-of-sample data points, $\breve{k}_t(\b{x}_i, \b{x})$ is an element of the centered out-of-sample kernel (see {\citep[Appendix C]{ghojogh2019unsupervised}}):
\begin{align}
\mathbb{R}^{n \times n_t} \ni \breve{\b{K}}_t &= \b{K}_t - \frac{1}{n} \b{1}_{n \times n} \b{K}_t - \frac{1}{n} \b{K} \b{1}_{n \times n_t} \nonumber \\
&~~~~ + \frac{1}{n^2} \b{1}_{n \times n} \b{K} \b{1}_{n \times n_t}, \label{equation_centered_outOfSample_kernel}
\end{align}
where $\b{1} := [1, 1, \dots, 1]^\top$, $\b{K}_t \in \mathbb{R}^{n \times n_t}$ is the not necessarily centered out-of-sample kernel, and $\b{K} \in \mathbb{R}^{n \times n}$ is the training kernel.

\subsubsection{Out-of-sample Embedding}

One can use Eq. (\ref{equation_embedding_eigenfunction}) to embed the $i$-th out-of-sample data point $\b{x}_i^{(t)}$. For this purpose, $\b{x}_i^{(t)}$ should be used in place of $\b{x}$ in Eq. (\ref{equation_embedding_eigenfunction}). 

Note that Eq. (\ref{equation_embedding_eigenfunction}) requires Eq. (\ref{equation_centered_outOfSample_kernel}).
In MDS and Isomap, $\b{K}$ is obtained by the linear kernel, $\b{X}^\top \b{X}$, and Eq. (\ref{equation_Kernel_Isomap}), respectively. 
Also, the out-of-sample kernel $\b{K}_t$ in MDS is obtained by the linear kernel between the training and out-of-sample data, i.e., $\b{X}^\top \b{X}_t$. In Isomap, the kernel $\b{K}_t$ is obtained by centering the geodesic distance matrix (see Eq. (\ref{equation_Kernel_Isomap})) where the geodesic distance matrix between the training and out-of-sample data is used. In calculation of this geodesic distance matrix, merely the training data points, and not the test points, should be used as the intermediate points in paths \cite{bengio2004out}. 

It is shown in {\citep[Corollary 1]{bengio2004out}} that using the geodesic distance with only training data as intermediate points, for teh sake of out-of-sample embedding in Isomap, is equivalent to the \textit{landmark Isomap} method \cite{de2003global}:
\begin{align}\label{equation_Isomap_outOfSample}
y_k(\b{x}) = \frac{1}{2 \sqrt{\delta_k}} \sum_{i=1}^n v_{ki} (\b{D}^{(g)}_\text{avg} - \b{D}^{(g)}_t(\b{x}_i, \b{x})),
\end{align}
where $\b{D}^{(g)}_\text{avg}$ denotes the average geodesic distance between the training points and $\b{D}^{(g)}_t$ is the geodesic distance between the $i$-th training point $\b{x}_i$ and the out-of-sample point $\b{x}$, in which the training set is used for intermediate points. 
Hence, one can use Eq. (\ref{equation_Isomap_outOfSample}) for out-of-sample embedding in Isomap.

It is noteworthy that in addition to the out-of-sample extension using eigenfunctions \cite{bengio2004out}, there exist some other methods for out-of-sample extension of MDS and Isomap \cite{bunte2012general,strange2011generalised}, which we pass by in this paper. 

% \subsection{Out of Sample for Kernel Isomap Using Kernel PCA}

% Kernel Isomap \cite{choi2004kernel} uses kernel PCA \cite{ghojogh2019unsupervised} for out-of-sample data. 
% First, the kernel between the training data $\b{X} \in \mathbb{R}^{d \times n}$ and the out-of-sample data $\b{X}_t \in \mathbb{R}^{d \times n_t}$ is calculated to have $\b{K}_t \in \mathbb{R}^{n \times n_t}$. This kernel can be centered in the feature space as {\citep[Appendix C]{ghojogh2019unsupervised}}:
% \begin{align}
% \mathbb{R}^{n \times n_t} \ni \breve{\b{K}}_t &= \b{K}_t - \frac{1}{n} \b{1}_{n \times n} \b{K}_t - \frac{1}{n} \b{K}' \b{1}_{n \times n_t} \nonumber \\
% &~~~~ + \frac{1}{n^2} \b{1}_{n \times n} \b{K}' \b{1}_{n \times n_t}, 
% \end{align}
% where $\b{1} := [1, 1, \dots, 1]^\top$.

% According to {\citep[Section 4.2]{ghojogh2019unsupervised}}, the eigenvalue problem for the kernel $\b{K}'$ (Eq. (\ref{equation_kernelIsomap1_K_prime})) is obtained:
% \begin{align}\label{equation_kernelPCA_eigen_centered_kernel_2}
% \b{K}' \b{V} = \b{V} \b{\Sigma}^2,
% \end{align}
% which is equivalent to Eq. (\ref{equation_classical_MDS_kernel_decompose}) where $\b{K}'$ is used rather than $\b{K}$ and $\b{\Delta} = \b{\Sigma}^2$. 
% Finally, according to {\citep[Section 4.4]{ghojogh2019unsupervised}}, using the leading $p$ eigenvectors, the embedding $\b{Y}_t$ of the out-of-sample data $\b{X}_t$ is calculated as:
% \begin{align}
% \mathbb{R}^{p \times n_t} \ni \b{Y}_t = \b{\Sigma}^{-1}\b{V}^\top\breve{\b{K}}_t,
% \end{align}

\subsection{Out of Sample for Isomap, Kernel Isomap, and MDS Using Kernel Mapping}

There is a kernel mapping method \cite{gisbrecht2012out,gisbrecht2015parametric} to embed the out-of-sample data in Isomap, kernel Isomap, and MDS. 
We introduce this method here.

We define a map which maps any data point as $\b{x} \mapsto \b{y}(\b{x})$, where:
\begin{align}\label{equation_kernel_tSNE_map}
\mathbb{R}^p \ni \b{y}(\b{x}) := \sum_{j=1}^n \b{\alpha}_j\, \frac{k(\b{x}, \b{x}_j)}{\sum_{\ell=1}^n k(\b{x}, \b{x}_{\ell})},
\end{align}
and $\b{\alpha}_j \in \mathbb{R}^p$, and $\b{x}_j$ and $\b{x}_{\ell}$ denote the $j$-th and $\ell$-th training data point.
The $k(\b{x}, \b{x}_j)$ is a kernel such as the Gaussian kernel:
\begin{align}
k(\b{x}, \b{x}_j) = \exp(\frac{-||\b{x} - \b{x}_j||_2^2}{2\, \sigma_j^2}),
\end{align}
where $\sigma_j$ is calculated as \cite{gisbrecht2015parametric}:
\begin{align}
\sigma_j := \gamma \times \min_{i}(||\b{x}_j - \b{x}_i||_2),
\end{align}
where $\gamma$ is a small positive number.

Assume we have already embedded the training data points using MDS (see Section \ref{section_MDS}), Isomap (see Section \ref{section_Isomap}), or kernel Isomap (see Section \ref{section_kernel_Isomap}); therefore, the set $\{\b{y}_i\}_{i=1}^n$ is available.
If we map the training data points, we want to minimize the following least-squares cost function in order to get $\b{y}(\b{x}_i)$ close to $\b{y}_i$ for the $i$-th training point:
\begin{equation}
\begin{aligned}
& \underset{\b{\alpha}_j\text{'s}}{\text{minimize}}
& & \sum_{i=1}^n ||\b{y}_i - \b{y}(\b{x}_i)||_2^2,
\end{aligned}
\end{equation}
where the summation is over the training data points.
We can write this cost function in matrix form as below:
\begin{equation}\label{equation_kernel_tSNE_leastSquares}
\begin{aligned}
& \underset{\b{A}}{\text{minimize}}
& & ||\b{Y} - \b{K}''\b{A}||_F^2,
\end{aligned}
\end{equation}
where $\mathbb{R}^{n \times p} \ni \b{Y} := [\b{y}_1, \dots, \b{y}_n]^\top$ and $\mathbb{R}^{n \times p} \ni \b{A} := [\b{\alpha}_1, \dots, \b{\alpha}_n]^\top$. 
The $\b{K}'' \in \mathbb{R}^{n \times n}$ is the kernel matrix whose $(i,j)$-th element is defined to be:
\begin{align}
\b{K}''(i,j) := \frac{k(\b{x}_i, \b{x}_j)}{\sum_{\ell=1}^n k(\b{x}_i, \b{x}_{\ell})}.
\end{align}
The Eq. (\ref{equation_kernel_tSNE_leastSquares}) is always non-negative; thus, its smallest value is zero.
Therefore, the solution to this equation is:
\begin{align}
\b{Y} - \b{K}''\b{A} = \b{0} &\implies \b{Y} = \b{K}''\b{A} \nonumber \\
&\overset{(a)}{\implies} \b{A} = \b{K}''^{\dagger}\, \b{Y}, \label{equation_kernel_tSNE_A_matrix}
\end{align}
where $\b{K}''^{\dagger}$ is the pseudo-inverse of $\b{K}''$:
\begin{align}
\b{K}''^{\dagger} = (\b{K}''^\top \b{K}'')^{-1} \b{K}''^\top,
\end{align}
and $(a)$ is because $\b{K}''^{\dagger}\,\b{K}'' = \b{I}$.

Finally, the mapping of Eq. (\ref{equation_kernel_tSNE_map}) for the $n_t$ out-of-sample data points is:
\begin{align}\label{equation_kernel_tSNE_outOfSample_Y}
\b{Y}_t = \b{K}''_t\,\b{A}, 
\end{align}
where the $(i,j)$-th element of the out-of-sample kernel matrix $\b{K}''_t \in \mathbb{R}^{n_t \times n}$ is:
\begin{align}
\b{K}''_t(i,j) := \frac{k(\b{x}_i^{(t)}, \b{x}_j)}{\sum_{\ell=1}^n k(\b{x}_i^{(t)}, \b{x}_{\ell})},
\end{align}
where $\b{x}_i^{(t)}$ is the $i$-th out-of-sample data point, and $\b{x}_j$ and $\b{x}_{\ell}$ are the $j$-th and $\ell$-th training data points.

\section{Landmark MDS and Landmark Isomap for Big Data Embedding}\label{section_landmark_methods}

Nystrom approximation, introduced below, can be used to make the spectral methods such as MDS and Isomap scalable and suitable for big data embedding. 

\subsection{Nystrom Approximation}

\textit{Nystrom approximation} is a technique used to approximate a positive semi-definite matrix using merely a subset of its columns (or rows) \cite{williams2001using}. 
Consider a positive semi-definite matrix $\mathbb{R}^{n \times n} \ni \b{K} \succeq 0$ whose parts are:
\begin{align}\label{equation_Nystrom_partions}
\mathbb{R}^{n \times n} \ni \b{K} = 
\left[
\begin{array}{c|c}
\b{A} & \b{B} \\
\hline
\b{B}^\top & \b{C}
\end{array}
\right],
\end{align}
where $\b{A} \in \mathbb{R}^{m \times m}$, $\b{B} \in \mathbb{R}^{m \times (n-m)}$, and $\b{C} \in \mathbb{R}^{(n-m) \times (n-m)}$ in which $m \ll n$. 

The Nystrom approximation says if we have the small parts of this matrix, i.e. $\b{A}$ and $\b{B}$, we can approximate $\b{C}$ and thus the whole matrix $\b{K}$. The intuition is as follows. Assume $m=2$ (containing two points, a and b) and $n=5$ (containing three other points, c, d, and e). If we know the similarity (or distance) of points a and b from one another, resulting in matrix $\b{A}$, as well as the similarity (or distance) of points c, d, and e from a and b, resulting in matrix $\b{B}$, we cannot have much freedom on the location of c, d, and e, which is the matrix $\b{C}$. This is because of the positive semi-definiteness of the matrix $\b{K}$. 
The points selected in submatrix $\b{A}$ are named \textit{landmarks}. Note that the landmarks can be selected randomly from the columns/rows of matrix $\b{K}$ and, without loss of generality, they can be put together to form a submatrix at the top-left corner of matrix. 

As the matrix $\b{K}$ is positive semi-definite, by definition, it can be written as $\b{K} = \b{O}^\top \b{O}$. If we take $\b{O} = [\b{R}, \b{S}]$ where $\b{R}$ are the selected columns (landmarks) of $\b{O}$ and $\b{S}$ are the other columns of $\b{O}$. We have:
\begin{align}
\b{K} &= \b{O}^\top \b{O} = 
\begin{bmatrix}
\b{R}^\top \\
\b{S}^\top
\end{bmatrix}
[\b{R}, \b{S}] \label{equation_Nystrom_kernel_OtransposeO} \\
&= 
\begin{bmatrix}
\b{R}^\top \b{R} & \b{R}^\top \b{S} \\
\b{S}^\top \b{R} & \b{S}^\top \b{S}
\end{bmatrix}
\overset{(\ref{equation_Nystrom_partions})}{=} 
\begin{bmatrix}
\b{A} & \b{B} \\
\b{B}^\top & \b{C}
\end{bmatrix}.
\end{align}
Hence, we have $\b{A} = \b{R}^\top \b{R}$. The eigenvalue decomposition \cite{ghojogh2019eigenvalue} of $\b{A}$ gives:
\begin{align}
&\b{A} = \b{U} \b{\Sigma} \b{U}^\top \label{equation_Nystrom_A_eig_decomposition} \\
&\implies \b{R}^\top \b{R} = \b{U} \b{\Sigma} \b{U}^\top \implies \b{R} = \b{\Sigma}^{(1/2)} \b{U}^\top. \label{equation_Nystrom_R}
\end{align}
Moreover, we have $\b{B} = \b{R}^\top \b{S}$ so we have:
\begin{align}
&\b{B} = (\b{\Sigma}^{(1/2)} \b{U}^\top)^\top \b{S} = \b{U} \b{\Sigma}^{(1/2)} \b{S} \nonumber \\
&\overset{(a)}{\implies} \b{U}^\top \b{B} = \b{\Sigma}^{(1/2)} \b{S} \implies \b{S} = \b{\Sigma}^{(-1/2)} \b{U}^\top \b{B}, \label{equation_Nystrom_S}
\end{align}
where $(a)$ is because $\b{U}$ is orthogonal (in the eigenvalue decomposition). 
Finally, we have:
\begin{align}
\b{C} &= \b{S}^\top \b{S} = \b{B}^\top \b{U} \b{\Sigma}^{(-1/2)} \b{\Sigma}^{(-1/2)} \b{U}^\top \b{B} \nonumber \\
&= \b{B}^\top \b{U} \b{\Sigma}^{-1} \b{U}^\top \b{B} \overset{(\ref{equation_Nystrom_A_eig_decomposition})}{=} \b{B}^\top \b{A}^{-1} \b{B}. \label{equation_Nystrom_C}
\end{align}
Therefore, Eq. (\ref{equation_Nystrom_partions}) becomes:
\begin{align}\label{equation_Nystrom_partions_withDetails}
\b{K} \approx 
\left[
\begin{array}{c|c}
\b{A} & \b{B} \\
\hline
\b{B}^\top & \b{B}^\top \b{A}^{-1} \b{B}
\end{array}
\right].
\end{align}

\begin{proposition}
By increasing $m$, the approximation of Eq. (\ref{equation_Nystrom_partions_withDetails}) becomes more accurate. 
If rank of $\b{K}$ is at most $m$, this approximation is exact. 
\end{proposition}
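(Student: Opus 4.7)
The plan is to reformulate both assertions in terms of the error matrix $\b{E} := \b{C} - \b{B}^\top \b{A}^{-1} \b{B}$, which is the Schur complement of $\b{A}$ in $\b{K}$. Because the Nystrom approximation in Eq.~(\ref{equation_Nystrom_partions_withDetails}) agrees with $\b{K}$ exactly on the $\b{A}$, $\b{B}$, and $\b{B}^\top$ blocks by construction, the whole approximation error $\b{K} - \b{K}_\text{approx}$ is supported on the lower-right block and equals $\b{E}$. Both parts of the proposition therefore reduce to statements about $\b{E}$.

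For the exactness clause, I would observe that the derivation already carried out in Eqs.~(\ref{equation_Nystrom_kernel_OtransposeO})--(\ref{equation_Nystrom_C}) implicitly used a factorization $\b{K} = \b{O}^\top \b{O}$ with $\b{O} \in \mathbb{R}^{m \times n}$. Such a factorization exists precisely when $\text{rank}(\b{K}) \leq m$: one writes $\b{K} = \b{O}_0^\top \b{O}_0$ with $\b{O}_0 \in \mathbb{R}^{r \times n}$ for $r = \text{rank}(\b{K})$ and pads with $m-r$ zero rows. Choosing landmark columns whose corresponding columns of $\b{O}$ span its column space makes $\b{A}$ invertible (otherwise one passes to the Moore--Penrose pseudo-inverse), after which the derivation yields $\b{C} = \b{B}^\top \b{A}^{-1} \b{B}$ exactly, so $\b{E} = \b{0}$.

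For the monotonicity claim, the key facts are that $\b{E} \succeq 0$ always (since $\b{K} \succeq 0$ forces the Schur complement $\b{C} - \b{B}^\top \b{A}^{-1} \b{B} \succeq 0$ via the standard block-matrix positive-semidefiniteness criterion), and that $\b{E}$ decreases in the Loewner order as landmarks are appended to the current set. I would prove the latter by adding landmarks one at a time: writing the enlarged $\b{A}$ in $2 \times 2$ block form with the old $\b{A}$ in the top-left and applying the block-inverse formula, a short calculation shows that the new error matrix equals the old one minus a positive-semidefinite rank-one correction. Iterating up to $m = n$ forces $\b{A} \to \b{K}$ and $\b{E} \to \b{0}$, so the approximation becomes exact in the limit.

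The main obstacle is that the phrase ``by increasing $m$ the approximation becomes more accurate'' is informal and is rigorously monotone only along a nested sequence of landmark choices; for independent resampling of landmarks one must instead invoke the standard Nystrom error bound by the SVD tail $\sum_{k > m} \sigma_k(\b{K})$, which is where the bulk of the technical work would lie if a quantitative statement were desired.
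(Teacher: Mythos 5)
Your proof is essentially correct, but it takes a genuinely different and considerably more rigorous route than the paper. The paper's own proof is little more than a heuristic: it observes that Eq.~(\ref{equation_Nystrom_C}) requires $\b{A}^{-1}$ to exist, hence $\b{A}$ must have rank $m$, ties this loosely to the rank of $\b{K}$ and the number of landmarks, and then simply asserts (as a practical remark) that more landmarks give better accuracy. It never identifies the error term, never proves positivity or monotonicity of anything, and the exactness clause is only implicit in the fact that the derivation of Eqs.~(\ref{equation_Nystrom_kernel_OtransposeO})--(\ref{equation_Nystrom_C}) presupposes a factorization $\b{K} = \b{O}^\top\b{O}$ with $\b{O}$ having $m$ rows, i.e., $\mathrm{rank}(\b{K}) \leq m$. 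Your argument, by contrast, isolates the error as the Schur complement $\b{E} = \b{C} - \b{B}^\top\b{A}^{-1}\b{B} = \b{S}^\top(\b{I} - \b{P}_{\b{R}})\b{S}$, gets $\b{E} \succeq 0$ from the positive-semidefiniteness criterion, proves Loewner-order monotonicity under nested landmark sets via the block-inverse (projection-enlargement) argument, and correctly flags that ``increasing $m$'' is rigorous only along nested landmark choices, with spectral-tail bounds needed otherwise. This buys an actual theorem where the paper offers intuition, and it also exposes a caveat the paper glosses over: exactness under $\mathrm{rank}(\b{K}) \leq m$ holds only when the landmark columns span the column space of $\b{K}$ (equivalently $\mathrm{rank}(\b{A}) = \mathrm{rank}(\b{K})$), and one must use the pseudo-inverse when $\mathrm{rank}(\b{K}) < m$ since then $\b{A} \in \mathbb{R}^{m \times m}$ cannot be invertible --- note your phrasing ``makes $\b{A}$ invertible'' is slightly off on exactly this point, though your parenthetical switch to the Moore--Penrose pseudo-inverse is the right repair. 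The only cost of your approach is length; the paper's argument is shorter but does not actually establish either clause of the proposition.
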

\begin{proof}
In Eq. (\ref{equation_Nystrom_C}), we have the inverse of $\b{A}$. In order to have this inverse, the matrix $\b{A}$ must not be singular. For having a full-rank $\b{A} \in \mathbb{R}^{m \times m}$, the rank of $\b{A}$ should be $m$. This results in $m$ to be an upper bound on the rank of $\b{K}$ and a lower bound on the number of landmarks. In practice, it is recommended to use more number of landmarks for more accurate approximation but there is a trade-off with the speed. 
\end{proof}

\begin{corollary}
As we usually have $m \ll n$, the Nystrom approximation works well especially for the low-rank matrices \cite{kishore2017literature}. Usually, because of the manifold hypothesis, data fall on a submanifold; hence, usually, the kernel (similarity) matrix  or the distance matrix has a low rank. Therefore, the Nystrom approximation works well for many kernel-based or distance-based manifold learning methods. 
\end{corollary}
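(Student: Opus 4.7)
The plan is to derive this corollary as a direct consequence of the preceding proposition, together with a qualitative appeal to the manifold hypothesis, rather than via a new technical computation. The proposition already establishes that the Nystrom reconstruction in Eq. (\ref{equation_Nystrom_partions_withDetails}) is exact whenever $\text{rank}(\b{K}) \le m$, and that it degrades gracefully as $m$ increases relative to the effective rank. So the work reduces to explaining why, in the manifold learning regime, the effective rank of $\b{K}$ is expected to be small relative to $n$.

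First I would restate the setup: we apply Nystrom to either a kernel (similarity) matrix $\b{K}$ built from the training points, or to a Gram-like matrix derived from a distance matrix via Eq. (\ref{equation_generalKernel_and_distanceMAtrix}) or Eq. (\ref{equation_Kernel_Isomap}). In both cases $\b{K}$ is an $n \times n$ positive semi-definite matrix, and by the factorization $\b{K} = \b{O}^\top \b{O}$ used in Eqs. (\ref{equation_Nystrom_kernel_OtransposeO})--(\ref{equation_Nystrom_C}) its rank equals the dimension of the ambient feature space in which the implicit embedding of the $n$ points lives.

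Next I would invoke the manifold hypothesis: the data $\{\b{x}_i\}_{i=1}^n$ are assumed to lie on (or very close to) a $p$-dimensional submanifold with $p \ll d \le n$. For classical MDS with a linear kernel this immediately gives $\text{rank}(\b{X}^\top \b{X}) \le \min(d,n)$, and after centering and restricting to the intrinsic coordinates, the numerically significant rank is on the order of $p$. For Isomap, the same argument applies to $-\tfrac{1}{2}\b{H}\b{D}^{(g)}\b{H}$, since the geodesic distances are, up to boundary effects, those of a $p$-dimensional manifold and their double-centered matrix is (approximately) a Gram matrix of $p$-dimensional coordinates. For smooth kernels (e.g.\ RBF), the spectrum of $\b{K}$ decays rapidly when the data concentrate on a low-dimensional set, so the tail eigenvalues contribute negligibly.

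Combining these two ingredients finishes the argument: choosing the number of landmarks $m$ to be at least as large as the effective rank places us in the regime covered by the proposition, where the block $\b{B}^\top \b{A}^{-1} \b{B}$ reconstructs $\b{C}$ either exactly or with small error, so the full $\b{K}$ of Eq. (\ref{equation_Nystrom_partions_withDetails}) is a good proxy for the true kernel. Hence Nystrom-based schemes (e.g., landmark MDS and landmark Isomap) remain accurate while operating on an $m \times m$ rather than an $n \times n$ eigenproblem. The only delicate point I would flag is that the corollary is really qualitative: the manifold hypothesis yields approximate rather than exact low rank, and a fully quantitative statement would require a bound on the tail eigenvalues of $\b{K}$ together with a perturbation analysis of Eq. (\ref{equation_Nystrom_C}) in terms of $\|\b{A}^{-1}\|$; this is the main obstacle to turning the corollary into a sharp theorem, but it is beyond the scope of the heuristic justification offered here.
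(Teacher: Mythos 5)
Your proposal is correct and follows essentially the same route as the paper, which states this corollary without a separate formal proof and relies exactly on the combination you describe: the preceding proposition (exactness of Eq.~(\ref{equation_Nystrom_partions_withDetails}) when $\mathrm{rank}(\b{K}) \le m$, improving accuracy as $m$ grows) plus the manifold-hypothesis heuristic that the kernel or double-centered distance matrix has low (effective) rank. Your added remarks on the approximate rather than exact low rank and the need for a tail-eigenvalue/perturbation analysis for a quantitative version are a reasonable elaboration, but they do not change the argument, which matches the paper's intent.
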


\subsection{Using Kernel Approximation in Landmark MDS}

Consider Eq. (\ref{equation_Nystrom_partions}) or (\ref{equation_Nystrom_partions_withDetails}) as the partitions of the kernel matrix $\b{K}$. Note that the (Mercer) kernel matrix is positive semi-definite so the Nystrom approximation can be applied for kernels.  

Recall that Eq. (\ref{equation_classical_MDS_kernel_decompose}) decomposes the kernel matrix into eigenvectors and then Eq. (\ref{equation_metric_MDS_Y}) embeds data. However, for big data, the eigenvalue decomposition of kernel matrix is intractable. Therefore, using Eq. (\ref{equation_Nystrom_A_eig_decomposition}), we decompose an $m \times m$ submatrix of kernel. 
Comparing Eqs. (\ref{equation_classical_MDS_kernel_inTermsOf_Y}) and (\ref{equation_Nystrom_kernel_OtransposeO}) shows that:
\begin{align}\label{equation_Nystrom_Y}
\mathbb{R}^{n \times n} \ni \b{Y} = [\b{R}, \b{S}] \overset{(a)}{=} [\b{\Sigma}^{(1/2)} \b{U}^\top, \b{\Sigma}^{(-1/2)} \b{U}^\top \b{B}],
\end{align}
where $(a)$ is because of Eqs. (\ref{equation_Nystrom_R}) and (\ref{equation_Nystrom_S}) and the terms $\b{U}$ and $\b{\Sigma}$ are obtained from Eq. (\ref{equation_Nystrom_A_eig_decomposition}). 
The Eq. (\ref{equation_Nystrom_Y}) gives the approximately embedded data, with a good approximation. This is the embedding in \textit{landmark MDS} \cite{de2003global,de2004sparse}. Truncating this matrix to have $\b{Y} \in \mathbb{R}^{p \times n}$, with top $p$ rows, gives the $p$-dimensional embedding of the $n$ points. 

Comparing Eq. (\ref{equation_Nystrom_Y}) with Eq. (\ref{equation_metric_MDS_Y}) shows that the formulae for embedding of landmarks, $\b{R}$, and the whole data (without Nystrom approximation) are similar to each other but one is with only landmarks and the other is with the whole data.  

\begin{figure*}[!t]
\centering
\includegraphics[width=6in]{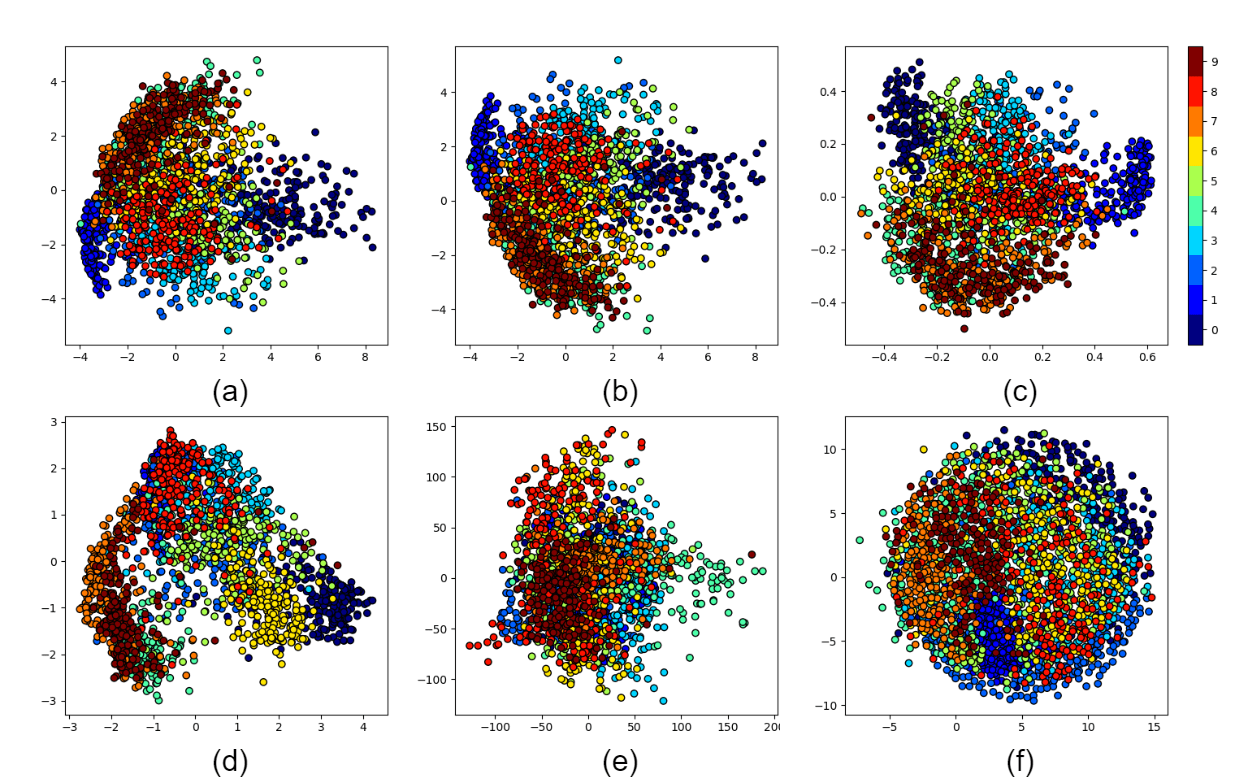}
\caption{Embedding of the training data in (a) classical MDS, (b) PCA, (c) kernel classical MDS (with cosine kernel), (d) Isomap, (e) kernel Isomap, and (f) Sammon mapping.}
\label{figure_simulation_embedding}
\end{figure*}

\begin{figure}[!t]
\centering
\includegraphics[width=2.8in]{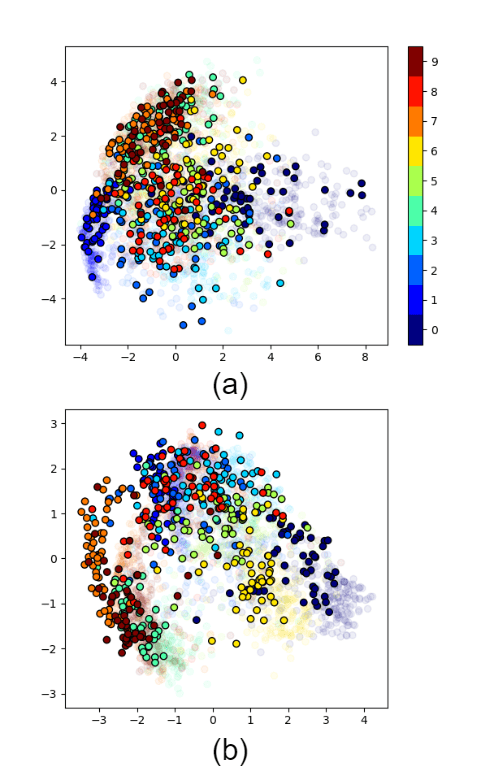}
\caption{Embedding of the out-of-sample data in (a) classical MDS, and (b) Isomap. The transparent points indicate the embedding of training data.}
\label{figure_simulation_embedding_outofsample}
\end{figure}

\subsection{Using Distance Matrix in Landmark MDS}

If $\b{D}_{ij}$ denotes the $(i,j)$-th element of the distance matrix and $\b{v}_j$ is the $j$-th element of a vector $\b{v}$, Eq. (\ref{equation_generalKernel_and_distanceMAtrix}) can be restated as \cite{platt2005fastmap}:
\begin{equation}
\begin{aligned}
\b{K} = \frac{-1}{2} \Big(&\b{D}_{ij}^2 - \b{1}_j \sum_i \b{c}_i \b{D}_{ij}^2 \\
&- \b{1}_i \sum_j \b{c}_j \b{D}_{ij}^2 + \sum_{i,j} \b{c}_i \b{c}_j \b{D}_{ij}^2\Big),
\end{aligned}
\end{equation}
where $\sum_i \b{c}_i = 1$. 

Let the partitions of the distance matrix be:
\begin{align}\label{equation_Nystrom_partions_distance}
\mathbb{R}^{n \times n} \ni \b{D} = 
\left[
\begin{array}{c|c}
\b{E} & \b{F} \\
\hline
\b{F}^\top & \b{G}
\end{array}
\right],
\end{align}
where $\b{E} \in \mathbb{R}^{m \times m}$, $\b{F} \in \mathbb{R}^{m \times (n-m)}$, and $\b{G} \in \mathbb{R}^{(n-m) \times (n-m)}$ in which $m \ll n$.
Comparing Eqs. (\ref{equation_Nystrom_partions}) and (\ref{equation_Nystrom_partions_distance}) shows that the partitions of the kernel matrix can be obtained from the partitions of the distance matrix as \cite{platt2005fastmap}:
\begin{equation}\label{equation_Nystrom_A_from_distance}
\begin{aligned}
\b{A}_{ij} = \frac{-1}{2} \Big(&\b{E}_{ij}^2 - \b{1}_i \frac{1}{m} \sum_p \b{E}_{pj}^2 \\
&- \b{1}_j \frac{1}{m} \sum_q \b{E}_{iq}^2 + \frac{1}{m^2} \sum_{p,q} \b{E}_{pq}^2\Big),
\end{aligned}
\end{equation}
\begin{equation}\label{equation_Nystrom_B_from_distance}
\begin{aligned}
\b{B}_{ij} = \frac{-1}{2} \Big(&\b{F}_{ij}^2 - \b{1}_i \frac{1}{m} \sum_p \b{F}_{qj}^2 - \b{1}_j \frac{1}{m} \sum_q \b{E}_{iq}^2 \Big),
\end{aligned}
\end{equation}
and $\b{C}_{ij}$ can be obtained from Eq. (\ref{equation_Nystrom_C}). 

In landmark MDS and landmark Isomap, the partitions (submatrices) $\b{E}$ and $\b{F}$ of the Euclidean and geodesic distance matrices are calculated, respectively (see Eq. (\ref{equation_Nystrom_partions_distance})).  
Then, Eqs. (\ref{equation_Nystrom_A_from_distance}), (\ref{equation_Nystrom_B_from_distance}), and (\ref{equation_Nystrom_C}) give us the partitions of the kernel matrix. Eqs. (\ref{equation_Nystrom_A_eig_decomposition}) and (\ref{equation_Nystrom_Y}) provide the embedded data.

It is noteworthy that the paper \cite{platt2005fastmap} shows that different landmark MDS methods, such as \textit{Landmark MDS (LMDS)} \cite{de2003global,de2004sparse}, \textit{FastMap} \cite{faloutsos1995fastmap}, and \textit{MetricMap} \cite{wang1999evaluating} are reduced to landmark MDS introduced here. 
The landmark MDS is also referred to as the \textit{sparse MDS} \cite{de2004sparse}.
Moreover, the \textit{Landmark Isomap (L-Isomap)} \cite{de2003global} is reduced to the landmark Isomap method explained here (see {\citep[Corollary 1]{bengio2004out}} for proof). 
In other words, the large-scale manifold learning methods make use of the Nystrom approximation \cite{talwalkar2008large}.

\section{Simulations}\label{section_simulations}

\subsection{Dataset}

For simulations, we used the MNIST dataset \cite{web_mnist_dataset} includes 60,000 training images and 10,000 test images of size $28 \times 28$ pixels. It includes 10 classes for the 10 digits, 0 to 9. 
Because of tractability of the eigenvalue problem, we used a subset of 2000 training points (200 per class) and 500 test points (50 per class). 

\subsection{Training Embedding}

\subsubsection{Classical MDS and Comparison to PCA}

The embedding of training data by classical MDS is shown in Fig. \ref{figure_simulation_embedding}. As can be seen, this embedding is interpretable because, for example, the digits (7 and 9), (6 and 8), and (5 and 6), which can be converted to each other by slight changes, are embedded close to one another. 

Figure \ref{figure_simulation_embedding} also depicts the embedding of training data by PCA. As can be seen, the embedding of PCA is equivalent to the embedding of classical MDS because rotation and flipping does not matter in manifold learning. This validates the claim of equivalence of PCA and classical MDS, stated in Section \ref{section_PCA_MDS_equivalence}. 

\subsubsection{Kernel classical MDS, Isomap, Kernel Isomap, and Sammon Mapping}

The embedding of kernel classical MDS or the generalized classical MDS (with cosine kernel) is also shown in Fig. \ref{figure_simulation_embedding}. This figure also includes the embedding by Isomap. 
It is empirically observed that the embeddings by Isomap are usually like the legs of an octopus \cite{ghojogh2019feature}. In this embedding, you can see two legs one of which is bigger than the other. 
Figure \ref{figure_simulation_embedding} also shows the embedding by kernel Isomap. Note that kernel Isomap still uses the kernel calculated using the geodesic distance. 
Finally, the embedding by Sammon mapping, with 1000 iterations, is also illustrated in Fig. \ref{figure_simulation_embedding}. The embeddings by all these methods are meaningful because the more similar digits have been embedded close to each other. 

An important fact about the embeddings is that the mean is zero in the embeddigns by classical MDS, kernel classical MDS, Isomap, and kernel Isomap. This is because of double centering the distance matrices in these methods (see Eqs. (\ref{equation_linearKernel_and_distanceMAtrix_2}), (\ref{equation_generalKernel_and_distanceMAtrix}), (\ref{equation_Kernel_Isomap}), and (\ref{equation_generalKernel_and_distanceMAtrix_squared})). 

% \subsubsection{Sammon Mapping}

% \subsubsection{Isomap}

% \subsubsection{Kernel Isomap}

\subsection{Out-of-sample Embedding}

% \subsubsection{Classical MDS}

The out-of-sample embedding of the classical MDS and Isomap can be seen in Fig. \ref{figure_simulation_embedding_outofsample}. For the out-of-sample embeddings by classical MDS and Isomap, we used Eqs. (\ref{equation_embedding_eigenfunction}) and (\ref{equation_Isomap_outOfSample}), respectively. In the Isomap method, as it is difficult to implement the geodesic distance matrix calculated from only the training points as the intermediate points, we used an approximation in which the test points can also be used as intermediate points. A slight shift in the mean of out-of-sample embedding in the Isomap result is because of this approximation. 

% \subsubsection{Isomap}

\subsection{Code Implementations}

The Python code implementations of simulations can be found in the repositories of the following github profile: \url{https://github.com/bghojogh}

\section{Conclusion}\label{section_conclusion}

This tutorial and survey paper was on MDS, Sammon mapping, and Isomap. Classical MDS, kernel classical MDS, metric MDS, and non-metric MDS were explained as categories of MDS. Sammon mapping and Isomap were also explained as special cases of metric MDS and kernel classical MDS. Kernel Isomap was also introduced. Out-of-sample extensions of these methods using eigenfunctions and kernel mapping were also provided. Landmark MDS and landmark Isomap using Nystron approximation were also covered in this paper. Finally, some simulations were provided to show the embeddings. 

Some specific methods, based on MDS and Isomap were not covered in this paper for the sake of brevity. some examples of these methods are supervised Isomap \cite{wu2004extended}, robust kernel Isomap  \cite{choi2007robust} and kernel Isomap for noisy data \cite{choi2005kernel}. 

\bibliography{References}
\bibliographystyle{icml2016}

\end{document}